\def\cQ{\mathcal{Q}} % model class
\def\cX{\mathcal{X}}  % instance space
\def\cV{\mathcal{V}}  % hypothesis class of validity function
\def\cD{\mathcal{D}} % distribution with constant validity
\def\hq{\hat{q}} % output of the learning
\def\hqM{\tilde{q}_{\textnormal{ ERM}}}
\def\hqERM{\hat{q}_{\textnormal{ ERM}}} % ERM estimate of the lowest loss model
\def\qURM{q^{*}_l} % true lowest loss model in Q
\def\hv{\hat{v}} % estimate of the validity function
\def\Pr{\textnormal{Pr}} % probability operator
\def\Ex{\mathbb{E}} % expectation operator
\def\Simplex{\mathcal{P}} % probability simplex
\def\supp{\textnormal{supp}} % support of distribution notation
\def\hV{\hat{V}} % estimates of validity 
\def\cV{\mathcal{V}} % for validity hypothesis class
\def\dTV{d_{TV}} % total variational distance
\def\dKL{d_{KL}} % KL divergence
\DeclareMathOperator*{\argmax}{arg\,max}
\DeclareMathOperator*{\argmin}{arg\,min}
\newtheorem{theorem}{Theorem}
\newtheorem{corollary}{Corollary}
\newtheorem{lemma}{Lemma}
\newenvironment{customthm}[1]
  {\innercustomthm}
  {\endinnercustomthm}
\newenvironment{customlemma}[1]
  {\innercustomlemma}
  {\endinnercustomlemma}
\title{Distribution Learning with Valid Outputs Beyond the Worst-Case}
\author{%
  Nick Rittler \\
  University of California - San Diego \\
  \texttt{nrittler@ucsd.edu} \\
  \And
Kamalika Chaudhuri \\
University of California - San Diego \\
\texttt{kamalika@cs.ucsd.edu} \\
  % \AND
  % Coauthor \\
  % Affiliation \\
  % Address \\
  % \texttt{email} \\
  % \And
  % Coauthor \\
  % Affiliation \\
  % Address \\
  % \texttt{email} \\
  % \And
  % Coauthor \\
  % Affiliation \\
  % Address \\
  % \texttt{email} \\
}
\begin{document}

\maketitle

\begin{abstract}
Generative models at times produce ``invalid'' outputs, such as images with generation artifacts and unnatural sounds. Validity-constrained distribution learning attempts to address this problem by requiring that the learned distribution have a provably small fraction of its mass in invalid parts of space -- something which standard loss minimization does not always ensure. To this end, a learner in this model can guide the learning via ``validity queries'', which allow it to ascertain the validity of individual examples. Prior work on this problem takes a worst-case stance, showing that proper learning requires an exponential number of validity queries, and demonstrating an improper algorithm which -- while generating guarantees in a wide-range of settings -- makes an atypical polynomial number of validity queries. In this work, we take a first step towards characterizing regimes where guaranteeing validity is easier than in the worst-case. We show that when the data distribution lies in the model class and the log-loss is minimized, the number of samples required to ensure validity has a weak dependence on the validity requirement. Additionally, we show that when the validity region belongs to a VC-class, a limited number of validity queries are often sufficient.
\end{abstract}

\section{Introduction} 

When sampling from a generative model, it is highly desirable that its outputs meet some basic criteria of quality. In the case 
of text, this may mean that generated sentences respect grammar rules, or avoid the use of biased or offensive language \cite{perez2022, abid2021}. When generating code, 
a criterion may be that the generated code successfully compiles \cite{hanneke2018}. In image generation, we might wish to avoid blurry outputs, or those possessing generation artifacts which clearly distinguish them from natural images \cite{kaneko2021, odena2016}. 

In this paper, we examine the statistical cost of ensuring that learned distributions produce such ``valid'' outputs. To do so, we consider an elegant formulation of the problem of learning such valid models due to \cite{hanneke2018}. In their work, training data are generated according to a probability distribution $P$, and the binary ``validity'' of examples is determined by some unknown ``validity function'' $v$. Given sample access to $P$ and query access to $v$, a learner attempts to identify a probability distribution which outputs invalid examples with probability at most $\epsilon_2$.  At the same time, the distribution should have a loss which is at most $\epsilon_1$ worse than that of the minimum loss model in a class $\cQ$ which outputs valid examples with probability 1.  Here, query access to $v$ captures the idea that collecting samples is often cheap, but verifying validity is often less so, possibly requiring a human-in-the-loop.

The initial work of \cite{hanneke2018} suggests that choosing such a low-loss, high-validity distribution $\hq$ may require a large number of validity queries. Under the assumption that $P$ is ``fully-valid'', i.e. outputs a valid example with probability 1, they show that in the worst case, $2^{\Omega(1/\epsilon_1)}$ validity queries are required to choose such a model $\hq$ from the class $\cQ$. They follow this result with an improper learning algorithm for choosing $\hq$ which, while achieving polynomial bounds on the number of validity queries, uses a relatively large number of validity queries $\tilde{O}\left( \log(|\cQ|)/\epsilon_1^2 \epsilon_2 \right)$.% Fascinatingly, this implies that the asymptotic query complexity of the scheme is usually dominated not by ensuring that the training distribution $P$ is fully-valid -- which can be done by discarding invalid training examples -- but by the model selection process given access to a fully-valid training set.  

The somewhat pessimistic picture painted by these fascinating complexity-theoretic results can be tracked to their generality. Firstly, it's possible that $\cQ$ and $P$ are significantly ``mismatched'', i.e. the support of each model $q \in \cQ$ has only a small overlap with the support of the distribution $P$, in which case the validity information contained in valid training samples is unhelpful to a proper learner. Secondly, their improper learning algorithm is largely loss-agnostic, in that it generates guarantees for a wide class of bounded loss functions. Finally, nothing is assumed about the form of the validity function $v$, precluding provable estimation. 

In this work, we offer a counterbalance to this picture, beginning an investigation into learning settings where guaranteeing validity is cheaper than such results might indicate. We first consider learning under complete elimination of model class mismatch, where $\cQ$ is rich enough to contain the fully-valid data distribution $P$, and the loss is the log-loss $l(f(x)) = \log(1/f(x))$.  It is intuitive that in this setting, loss minimization alone should guarantee validity. Somewhat less intuitively, we demonstrate an algorithm closely related to empirical risk minimization which uses just $\tilde{O}\left(\log(|\cQ|)/\min(\epsilon_1^2, \epsilon_2) \right)$ samples to guarantee its output meets loss and validity requirements -- in other words, validity comes quickly under random sampling from $P$ in this setting. 

Secondly, we consider learning under a different realizability assumption, namely that the validity region is a member of a VC-class of dimension $D$. In this setting, we provide an analysis of the natural scheme of restricting the empirical risk minimizer to an estimate of the valid part of space. We show that when small-loss models $q \in \cQ$ have at least constant validity, this scheme uses $\tilde{O}\left( D / \epsilon_2 \right)$ validity queries, implying a query cost reduction over the general-purpose algorithm of \cite{hanneke2018}. We also show that learning under the capped log-loss can be used to relax the assumption of constant validity at the cost of an extra factor of $1/\epsilon_1$. 

Our results suggest the existence of a rich web of settings in which validity may be cheaper than in the general case. They also suggest that the choice of the loss plays an important roll in guaranteeing valid outputs, compelling further investigation of the log-loss in particular.

\section{Related Work}
The framing of learning distributiuons in terms of PAC guarantees similar to \cite{valiant1984} dates back to \cite{kearns1994}, who consider the learnability of specific classes of discrete distributions under a realizability assumption. A significant body of work on distribution learning has been developed overtime, often focusing on algorithms for learning over parametric families or under specific ``structural'' assumptions \cite{haghtalab2019, daskalakis2013, kalai2010, daskalakis2012}. The only theoretical contribution to validity-constrained distribution learning under the formulation posed by \cite{hanneke2018} that we are aware of is that work itself. % Kamath doesn't use their work in his thesis

The study of loss functions for the evaluation of probabilistic models has often been studied the lens of ``scoring rules'' in the forecasting literature \cite{brier1950, good1952, gneiting2007}. There are some notable recent contributions towards expanding the understanding of when loss functions for distribution learning display desirable properties, e.g. ``properness'',  which designates that the loss is minimized by the true data distribution \cite{haghtalab2019, kimpara2023}.

The first half of this paper draws on intuition from hypothesis testing to evaluate the performance of empirical risk minimization. Hypothesis testing is a major focus of the classical statistics literature \cite{lehmann1986}. 
The bounds in the first half of the paper are due to analysis inspired by the Neyman-Pearson lemma \cite{neyman1933, rebeschini2021}, and rely on the approximation of total variational distance between product measures \cite{reis1981}.

The applied literature on generative modeling has consistently noted the problem of learned models producing ``invalid'' examples \cite{kusner2017, janz2017, isolap2017,  kaneko2021}. Various techniques have been proposed for mitigating invalidity generally, and in domain specific settings \cite{aitken2017, kusner2017, kong2023}. While working under the assumption that the validity function lies in a VC-class, the strategy we introduce has some rough semblance to a ``post-editing'' procedure proposed by \cite{kong2023}.

\section{Preliminaries}

\subsection{Problem Setup}
Let  $\cX$ be a subset of Euclidean space $\mathbb{R}^d$ with finite Lebesgue measure $\lambda$. Let $\Simplex$ denote the set of all probability distributions on the measurable space $(\cX, \mathcal{F}_{\cX})$, where $ \mathcal{F}_{\cX}$ arises from Lebesgue measurable sets intersected with $\cX$.  Let $P \in \Simplex$ be the data-generating distribution. 

In the eyes of the learner, the function $v : \cX \to \{0, 1\}$ is a fixed and unknown ``validity function'', measurable with respect to the relevant distributions. The validity function denotes whether or not an example $x \in \cX$ is considered a valid output for a learned approximation of $P$. The learner is given a model class of $\cQ \subset \Simplex$ of probability distributions on $\cX$, each with density $f_q$ with respect to $\lambda$, and afforded with the knowledge that at least one $q \in \cQ$ is ``fully-valid'', i.e. that there is some $q \in \cQ$ with invalidity $V(q) := \Pr_{X \sim q}\left(v(X) = 1\right) = 1$. We at times use the notion of ``invalidity'' of a model, by which we mean $I(q) = 1 - V(q)$.  Following the main exposition of \cite{hanneke2018}, we assume $\cQ$ is of finite cardinality.

The goodness-of-fit of a model $q \in \Simplex$ is governed by a decreasing ``local'' loss function $l : \mathbb{R}^{\geq 0} \to \mathbb{R} \cup \{ \infty\}$. %We will interest ourselves to so-called ``local'' loss functions, for which the loss of a model at a given point depends only on the mass it gives to that point. %-- such losses are strongly preferred in practice for speed of evaluation \cite{haghtalab2019}. 
Such a loss function gives rise to loss of model via $L_P(q; l):= \Ex_{X \sim P} \left[ l\left(f_q(X)\right) \right]$. Given an i.i.d sample $S$ from $P$, we let the empirical estimate of the loss of a model be $L_S(q; l) = \sum_{x_i \in S} l(f_q(x_i))/|S|$. We use the shorthand $L_P(q)$ and $L_S(q)$ to denote the true and empirical losses of $q$ under the log-loss $l(q) = \log(1/f_q(x))$, where $\log$ denotes the natural logarithm. We take the log-loss to be infinite at points where $f_q(x)=0$. 

\subsection{Goal of Learning}
The goal of the learner is to choose some $\hq \in \Simplex$ which has a loss $L_P(\hq; l)$ similar to that of the lowest-loss fully-valid model in $\cQ$, while simultaneously maintaining near full-validity.  Explicitly, consider the model
\begin{equation*}
q^* := \argmin_{q \in \cQ :  V(q) = 1} L_P(q ; l).
\end{equation*}
To describe the quality of an outputted model, we consider two learning parameters $\epsilon_1$ and $\epsilon_2$, where $\epsilon_1$ is used to control the loss sub-optimality, and $\epsilon_2$ to control the invalidity. Formally then, the goal of the learner is to output $\hq \in \Simplex$ satisfying $L(\hq) \leq L(q^*) + \epsilon_1$  and $ I(\hq) \leq \epsilon_2$. To accomplish this goal, the learner has sample access to $P$, and query access to $v$, i.e. a learner can draw any finite number of $i.i.d.$ samples from $P$, and any request the value of the validity function $v$ at any finite number of inputs in $\cX$. 
 
 At a minimum, we are interested in algorithms which require a number of samples from $P$ and number validity queries that is polynomial in $\log(|\cQ|)$, $1/\epsilon_1$ and $1/\epsilon_2$. Ideally, we would like to minimize the number of validity queries given some polynomial number of samples from $P$. The motivation for this goal is similar to the minimization of label queries in active learning for classification \cite{hanneke2014}, where samples from the marginal over instances are often cheap, but labeling such examples is assumed expensive. 
 
\subsection{Full-Validity of \texorpdfstring{$P$}{Lg}}
We assume that all samples from the data-generating distribution $P$ are valid, i.e. that $V(P)=1$.  Under such an assumption, the query demand of a learning algorithm can be conceptualized as the overhead number of queries sufficient for choosing a good model under the standard procedure of removing invalid examples from the training set. 

If the data distribution is not fully-valid, and valid samples are required by an algorithm, the question of minimizing the overall number of queries is dependent on the sample complexity of learning -- if one assumes that $P$ has been constructed by ``accepting'' valid samples from some underlying distribution which outputs a valid sample with constant probability, then the overall query cost incurred by an algorithm is on the order of the larger of the number of samples and the number of ``overhead'' validity queries it uses. 

In this paper, we are primarily interested in the ``overhead'' number of queries, which we refer to as the ``number of validity queries'' of a given scheme. In most cases, algorithm sample requirements are similar to $O(\log(|\cQ|)/\epsilon_1^2)$, which allows for accurate loss estimation in many settings.

\subsection{Summary of Previous Results}
The learning problem above is due to \cite{hanneke2018}, who considered the possibility of specifying learning algorithms meeting the above bi-criteria objective for any choice of bounded, decreasing, local loss function. 

This work gives some interesting insight into the difficulty of selecting such a low-loss, high-validity model. They begin by giving a negative result, namely that any proper learning algorithm outputting $\hq \in \cQ$, must make $2^{\Omega(1/\epsilon_1)}$ validity queries in the worst case, regardless of the number of samples available from $P$. This result arises from a specific problem instance wherein every $q \in \cQ$ has a significant amount of mass outside of the support of $P$, in which case samples from a fully-valid $P$ do not give information about $v$ in parts of space relevant to the choice of $\hq \in \cQ$.

On the other hand, they demonstrate an improper learning algorithm which achieves polynomial bounds on samples and validity queries for any choice of loss meeting the above criteria. Their algorithm harnesses a constrained ERM oracle, iteratively querying the validity of samples from the model $q \in \cQ$ which is the empirical loss minimizer putting no mass on points known to be invalid. In particular, their scheme uses $\tilde{O}(\log(|\cQ|)/\epsilon_1^2)$ samples and $\tilde{O}(\log(|\cQ|)/\epsilon_1^2 \epsilon_2)$ validity queries.

\section{Learning Without Model Class Mismatch Under the Log-Loss}

We first consider the problem of selecting a low-loss, high-validity model under a relaxation of two of the main sources of difficulty in original problem formulation: the misalignment of the model class $\cQ$ with the data distribution $P$, and the lack of assumptions on the loss. 

In particular, we consider the problem under a realizability assumption, namely that $P \in \cQ$, further investigating the power of the log-loss. Such a setting is arguably more closely aligned with contemporary learning settings with rich model classes that appropriately capture features of the underlying data distribution, where the validity information contained in samples from $P$ can be exploited by convergence to the best information-theoretic representation of $P$ in $\cQ$.

The log-loss is by far the most widely-used loss in practice \cite{haghtalab2019}. It is a classic result of the proper scoring rule literature that the log-loss is the unique strictly-proper local loss, i.e. the only local loss under which for all distributions $q \ne P$, it holds that $L_P(P; l) < L_P(q ; l)$. This highly desirable property -- implying that convergence to the optimum over $\Simplex$ coincides with convergence to $P$ -- makes the choice of an alternative outside of capped variants preferable only under specialized circumstances. 

\subsection{Towards Validity without Validity Queries}
Given that samples are assumed to be valid, and the log-loss permits convergence to the data generating distribution, one would hope that simply selecting a model $\hq \in \cQ$ which is a sufficiently good representation of $P$ under the log-loss would yield validity guarantees in this setting. Simply utilizing empirical risk minimization (ERM) is the canonical approach to this end, and one which, given sufficient data from $P$, uses exactly zero validity queries. 

Note that any model $q$ with invalidity $I(q) > \epsilon_2$ necessary has $\dTV(q, P) > \epsilon_2$. In this case, $q$ must have at least $\epsilon_2$ mass in the invalid part of space, where $P$ has none. Thus, if one can guarantee that $\hq$ has $\dTV(\hq, P) \leq \epsilon_2$, the validity requirement is met. Recalling the Pinsker inequality $\dTV(q, q') \leq O(\sqrt{\dKL(q, q')})$ relating total variational distance and KL-divergence, it follows that obtaining a model $\hq$ which is at most $\epsilon_2^2$ sub-optimal in log-loss yields a model meeting the validity requirement.% \nr{I think somewhere around here is where you want to mention the the fact that strict realizability is not necessary for zero query learning because of pinkser. maybe a bit later}

While this illustrates useful intuition for the setting, it glosses over two main issues. Firstly, empirical estimates of the log-loss do not admit concentration guarantees -- one can construct simple examples where $\mathbb{E}_{X \sim P}[\log(1/f_q(X))]$ is unbounded above, but with high probability, the empirical estimate $L_S(q) =  \sum_{x \in S} \log(1/f_q(x)) / |S|$ is approximately that of $P$  \cite{haghtalab2019}. Thus, selecting low-empirical loss models can never yield loss guarantees. Secondly, this application of Pinsker's inequality demands $\epsilon_2^2$ loss sub-optimality, suggesting that ensuring validity via the selection of a good model under the log-loss is even harder than guaranteeing a small loss. 

We would hope that in the case that zero-query learning is possible, that guaranteeing validity arises somewhat coincidently with convergence to $P$, meaning that the sample complexity is not much worse given a validity requirement than without one. Thus, the path towards  satisfaction of the learning objectives requires subtle handling, and compels particular attention to the sample complexity dependence on the validity parameter $\epsilon_2$. 

\subsection{Analysis of Empirical Risk Minimization}
As indicated above, it is not possible to guarantee that empirical risk minimization (ERM) outputs a model with small log-loss. It is, however possible to guarantee that it outputs a model which closely resembles $P$ and inherits validity guarantees with a small number of samples. 

In particular, it's possible to show that given sufficient samples, ERM yields a model with small total variation to $P$ when $P \in \cQ$. This is due to the following folklore theorem \cite{haghtalab2019}, which we prove under assumption of density existence in the Appendix.
\begin{customlemma}{4}\label{lemma: folklore}
Fix $0 < \epsilon, \delta < 1$ arbitrarily, and let $P, q \in \Simplex$ be distributions with densities with respect to $\lambda$. Then if $\dTV(q, P) \geq \epsilon$, and $S \sim P^n$ for $n \geq \Omega(\log(1/\delta)/\epsilon^2)$, it holds with probability $\geq 1-\delta$ that 
\begin{equation*}
L_S(P) < L_S(q).
\end{equation*}
\end{customlemma}

Thus, at the statistical cost of estimating a coin bias, any distribution $q$ with total variation $\geq \epsilon$ from the data distribution will reveal itself to be empirically inferior when the log-loss is used. This can be easily leveraged to generate guarantees for ERM over $\cQ$ in terms of total variation. 

It is tempting to think that this is the entire story when it comes to guaranteeing validity. After all, we argued above that small total variation from $P$ is sufficient for $\epsilon_2$ invalidity. That said, simply looking at total variation ignores a particular structural feature of distributions $q$ with $I(q) > \epsilon_2$ -- in particular, such distributions have mass in parts of space in which $P$ does not. 

This observation can be used to construct tight lower bounds on the total variational distance between product measures arising from $P$ and $q$ with $I(q)> \epsilon_2$. This leads to the following result, which states that ERM yields a faithful representation of the data generating distribution that is at most $\epsilon_2$ invalid given a number of samples with a modest dependence on the validity parameter $\epsilon_2$.
\begin{customlemma}{5}\label{lemma: ERM}
Fix $0 < \delta, \epsilon_1, \epsilon_2 < 1$ arbitrarily, and suppose $P \in \mathcal{Q}$. If $P$ is fully-valid under $v$, and $S \sim P^n$ for $n \geq \Omega\left( \frac{\log(|\mathcal{Q}|) + \log(1/\delta)}{\min(\epsilon_1^2, \epsilon_2)}\right)$, then with probability $\geq 1-\delta$ over $S \sim P^n$, the ERM solution 
\begin{equation*}
\hat{q} = \argmin_{q \in \mathcal{Q}} \sum_{x_i \in S} \log(1/q(x_i)), 
\end{equation*}
satisfies both 
\begin{equation*}
\dTV(\hq, P) \leq \epsilon_1 \  \ and \   \ I(\hq) \leq \epsilon_2.
\end{equation*}
\end{customlemma}
%While this result ignores the log-loss in favor of total variation, it obtains a faithful representation of $P$ with reduced dependence on the relevant learning parameters versus the general algorithm of \cite{hanneke2018}. Clearly this is true when the data generating is fully valid, but holds also when the learning procedure involves validity-filtering samples examples from some $P'$ to create a fully-valid $P$.  If $P'$ produces valid outputs with constant probability, then the zero-query overhead of ERM in the realizable log-loss setting implies an overall validity query complexity of $\tilde{O}\left( \log(|\mathcal{Q}|)/\min(\epsilon_1^2, \epsilon_2)\right)$, removing the interaction of learning parameters found in the $\tilde{O}(\log(|\cQ|)/\epsilon_1^2 \epsilon_2)$ guarantee of the general algorithm.

%While realizability is utilized in achieving the above dependence on $\epsilon_2$, even in non-realizable scenarios where $P \notin \cQ$, the structure of models $q$ with $I(q)> \epsilon_2$ versus a fully-valid $P$ pervades. Such models are easy to identify because they put mass in parts of space that $P$ does not. This fact, which permits the $1/\epsilon_2$ dependence above, suggests that under the log-loss, avoiding invalidity is even easier than the finding a faithful representation of $P$. While this structure pervades, the question of whether it can be exploited in agnostic settings is open (Seems hard)
Note that this guarantee is not redundant -- having $\dTV(\hq, P) \leq \epsilon_1$ does not imply $I(q) \leq \epsilon_2$ when $\epsilon_2 < \epsilon_1$.

\subsection{Attaining Log-Loss Guarantees}

\begin{algorithm}[t]
\caption{Modifying ERM to Yield Log-Loss Guarantees}\label{alg: log_loss}
\begin{algorithmic}[1]
\Procedure{\texttt{finite\_log\_loss}}{Distribution Class $\cQ$, $S$, $\epsilon_{\wedge} = \min(\epsilon_1, \epsilon_2)$}
\vspace{.75mm}
\State $\hqERM \gets \argmin_{q \in \mathcal{Q}} \sum_{x_i \in S} \log\left(1/ f_q(x_i) \right)$ 
\vspace{.5mm}
\State  \textbf{return} $ \hq = (1-\epsilon_{\wedge}/8) \cdot \hqERM + \epsilon_{\wedge}/8 \cdot u$ \Comment{Mix ERM, uniform distribution}
\vspace{.75mm}
\EndProcedure
\end{algorithmic}
\end{algorithm}

This result can be interpreted as a vote of confidence for the naive training of generative models under the log-loss. Nevertheless, from a learning-theoretic perspective, there is a question whether or not it is possible to guarantee low log-loss while maintaining validity with zero validity queries. 

While ERM cannot possibly furnish log-loss guarantees, it turns out that it is possible to modify the output of ERM to generate log-loss guarantees at the cost of an extra polylogarithmic factor in the sample complexity, at least when the densities $f_q$ are bounded above and below in their support.\footnote{This does not yield uniform convergence over $\cQ$ given that the support of $q \in \cQ$ need not align with $P$}

The idea, formalized in Algorithm \ref{alg: log_loss}, is simply to mix the output of ERM with the uniform distribution. Giving the uniform distribution a mixture component on the order of $\min(\epsilon_1, \epsilon_2)$ can be shown to ensure that the validity guarantees of the ERM output are preserved, while also giving the outputted distribution support across the entire space. This leads to the following theorem.

\begin{customthm}{1}\label{thm: log_loss}
Fix $0 < \delta, \epsilon_1, \epsilon_2 < 1$ arbitrarily, and suppose $P \in \mathcal{Q}$ and that $P$ is fully-valid under $v$. If it holds that for each $q \in \cQ$ that $\alpha \leq f_q(x) \leq \beta$ for all $x \in \supp(q)$, then there is an
\begin{equation*}
N \leq \tilde{O}\left( \frac{\log^2\left(1/\min(\epsilon_1, \epsilon_2, \alpha) \right) \cdot  \big( \log(|\cQ|) + \log(1/\delta)\big)}{\min(\epsilon_1^2, \epsilon_2)} \right),
\end{equation*}
such that for all $n \geq N$, with probability $\geq 1-\delta$, the output $\hq$ of Algorithm \ref{alg: log_loss} satisfies 
\begin{equation*}
L_P(\hq) \leq L_P(q^*) + \epsilon_1 \ \ and \ \ \ I(\hq) \leq \epsilon_2.
\end{equation*}
\end{customthm}
Here the $\tilde{O}$ notation hides a polylogarithmic dependence on $1/\beta$, which is insignificant in most regimes, and treats the density of the uniform distribution over $\cX$ as a constant, which would otherwise also enter polylogarithmically.

Theorem 1 shows that guarantees with respect to the unbounded log-loss are attainable improperly, i.e. when the learner can choose $q \notin \cQ$. It's an interesting question whether the logarithmic dependence on $\min(\epsilon_1, \epsilon_2)$ can be removed with a more subtle strategy. 

\subsection{Discussion of Optimality}

One might suspect that achieving a smaller dependence than $1/ \epsilon_2$ on the validity parameter should be impossible. We confirm this is true at least for proper learners, showing that the analysis of ERM in Lemma \ref{lemma: ERM} is tight in its dependence on $\epsilon_2$. This lemma is used to generate the validity guarantee in Theorem \ref{thm: log_loss}. 

\begin{customthm}{2}
For all $\epsilon_2 < 1/4$ and for all proper learners $L : S \to \Simplex$, if the sample $S \sim P^n$ is of size $n \leq 1/8\epsilon_2$, then there exists a triple $(P, \cQ, v)$ with $P \in \cQ$ and $P$ fully-valid, on which $L(S) $ has invalidity $I(L(S)) > \epsilon_2$ with probability $\geq 1/4$.
\end{customthm}
The intuition here is that while any invalid $q \in \cQ$ has at least $\epsilon_2$ total variation from $P$, in the worst case, the total variation between $q$ and $P$ is upper bounded by $O(\epsilon_2$) as well. This makes distinguishing between $P$ and some $\epsilon_2$ invalid distribution hard enough to generate such a lower bound.

The sample requirement of $1/\epsilon_1^2$, both in our guarantees and in previous work, is a standard offshoot of loss estimation, irrespective of the search of a valid model. In general, one cannot expect improvements to this end -- this is the standard dependence one finds for estimating the means bounded random variables. % should check for a lower bound here
This suggests that the ``realizable complexity'' for this setting is $1/\min(\epsilon_1^2, \epsilon_2)$  -- while non-zero losses should not be generally estimable using ``realizable'' techniques, guaranteeing small invalidity can when $P \in \cQ$.

\section{Utilizing Estimates of the Validity Function in Training} 

In the general formulation of the problem, the learner is given an arbitrary bounded, decreasing loss, a model class $\cQ$ which is mismatched with $P$,  and has no a priori information about the validity function $v$. In such a setting, it is clear that validity queries are necessary. 

%It's difficult to use such queries in any other way than in estimates of integrals of the form $V(q) = \mathbb{E}_{X \sim q}[v(X)]$ for appropriately-chosen models $q \in \Simplex$. 

In this section, we consider a setting where it is known to the learner that $v$ can be found in a hypothesis class $\cV$ of bounded complexity. Under such an assumption, we would hope to be able to lower the number of validity queries beyond the bounds of \cite{hanneke2018}. 

\subsection{Algorithm}

\begin{algorithm}[t]
\caption{Post-Hoc Restriction of ERM to an Estimate of Valid Outputs}\label{alg: bounded_complexity}
\begin{algorithmic}[1]
\Procedure{\texttt{valid\_restriction}}{Distribution Class $\cQ$, Validity Class $\mathcal{V}$, $\epsilon_1$, $\epsilon_2$, $\delta$, $\gamma$}
\vspace{2.5mm}
\State $S \gets \Omega\left( \frac{M^2 \left( \log(|Q|) + \log(1/\delta) \right)}{\epsilon_1^2}  \right)$ i.i.d. samples $\sim P$ 
\vspace{2.5mm}
\State $\hqERM \gets \argmin_{q \in \mathcal{Q}} \sum_{x \in S} l(q(x))$ 
\vspace{2.5mm}
\State $S_P \gets \Omega\left(\frac{ M \left( D \log\left(M/\epsilon_1\right) + \log(1/\delta)\right) }{\epsilon_1} \right)$ i.i.d. samples $\sim P$,

\hspace{12mm} $S_{\hqERM} \gets \Omega\left(\frac{ D \log\left(1/\gamma \epsilon_2\right) + \log(1/\delta) }{ \gamma \epsilon_2} \right)$ i.i.d. samples $ \sim \hqERM$
\vspace{2mm}
\State $\hv  \gets \argmin_{h \in \cV} \sum_{x \in S_P \cup S_{\hqERM}} \mathbbm{1}[h(x) \ne v(x)]$  \Comment{Label $S_{\hqERM}$ via queries to $v$}
\vspace{2.5mm}
\State \textbf{return} $f_{\hq} \propto f_{\hqERM}(x) \cdot \mathbbm{1}[\hv(x) =1]$ \textbf{if} $\hqERM \left( \{ \hv(x)= 1 \} \right) >0 $ \textbf{else} $f_{\hq} = f_{\hqERM}$
\vspace{2.5mm}
\EndProcedure
\end{algorithmic}
\end{algorithm}

A natural algorithm in this setting is to ``correct'' the invalidity of the empirical risk minimizer -- to restrict the empirical risk minimizer to parts of space which are valid with respect to an estimate of the validity $\hv$.  This is the precisely the idea formalized in Algorithm \ref{alg: bounded_complexity}. 

To generate guarantees for such a strategy, one must determine the distribution with respect to which the estimate $\hv$ should be accurate. In our case, we generate accuracy guarantees over both $P$ and the ERM model $\hqERM$ by selecting an estimate $\hv$ that has 0 empirical error over both distributions. The source of the query complexity of the algorithm comes from the fact that samples arising from $\hqERM$ must be labeled by oracle calls to $v$. Noting that samples from $P$ can be automatically labeled as valid by the full-validity of $P$ saves a constant factor over naively labeling all examples acquired in the second half of the algorithm.
\vspace{2mm}

Accuracy under samples from $P$ allows one to control the loss of $\hq$ by invoking the boundedness of the loss in the disagreement region of $\hv$ and $v$, and guarantees with respect to $\hqERM$ allow us to bound the invalidity of the restriction. Because $P$ is fully-valid, loss contributions from the agreement region of $v$ and $\hv$ correspond to parts of space where $\hv(x)=1$ -- as the loss is non-increasing, placing more mass in such parts of space can never increase the loss contribution attributable to integrating over this region.
\vspace{2mm}

Algorithm \ref{alg: bounded_complexity} also requires a parameter $\gamma > 0$. This parameter should be a validity lower bound on the models $q \in \cQ$, providing a safeguard on the possibility of an ``invalidity blowup'' when restricting the ERM output to a certain region of space -- one must normalize the restriction to output a probability distribution, which in this case means increasing mass in parts of space that are estimated to be valid. An a priori lower bound on the validity allows for precise enough estimation of $\hv$ that increasing the mass in such regions is unlikely to lead to appreciable invalidity in the final model.
\vspace{2mm}

It's possible that the restriction of the ERM estimated valid region is undefined -- this happens if and only if the estimated valid region has zero mass under the ERM. Given validity lower bounds for models $q \in \cQ$ , this is a low probability event which can occur only when estimation of the validity function is very poor relative to the query complexity. As one might imagine, the handling of this case is immaterial for PAC-guarantees. We choose to arbitrarily define behavior in this case by outputting the ERM model.

\subsection{Guarantees}

The restricted output $\hq$ of Algorithm \ref{alg: bounded_complexity} admits the following guarantee over loss sub-optimality and invalidity.

\begin{customthm}{3}\label{thm: vc}
Suppose $v \in \cV$ with VC-dimension $VC(\cV) \leq D$, and that for each $q \in \cQ$, the validity $V(q) \geq \gamma > 0$. For all $0 < \epsilon_1, \epsilon_2, \delta \leq 1$ and for all choices of non-increasing loss functions $l: \mathbb{R}^{\geq 0} \to [0, M]$, Algorithm \ref{alg: bounded_complexity} 
requires a number of samples 
\begin{equation*}
 \leq O\left( \frac{M^2 \left( \log(|\mathcal{Q}|) + \log(1/\delta)  \right)}{\epsilon_1^2}  + \frac{ M \left( D \log(M / \epsilon_1) + \log(1/\delta) \right)}{\epsilon_1}  \right),
\end{equation*}
and a number of validity queries 
\begin{equation*}
\leq O\left( \frac{D \log(1 /  \gamma \epsilon_2) + \log(1/\delta) }{\gamma \epsilon_2}  \right),  % just one factor of M in label complexity as need it for p* eps/M precision of active learner
\end{equation*} 
to ensure that with probability $\geq 1-\delta$, its output enjoys
\begin{equation*}
L_P(\hat{q}; l) \leq L_P(q^*; l) + \epsilon_1 \ \ and \ \  I(\hq) \leq \epsilon_2.
\end{equation*} 
\end{customthm} 

Thus, in regimes where e.g. $\gamma \geq \Omega(\epsilon_1)$, $D = \Theta(\log(|\cQ|))$, this guarantee represents a reduced number of queries under the $\tilde{O}(M^2 \log(| \cQ |)/ \epsilon_1^2 \epsilon_2)$ bound of \cite{hanneke2018}. It also implies a ``decoupling'' of the query complexity from $\epsilon_1$. % interesting future direction to look at whether estimable v is necessary for such the elimination of eps1

We note that the sample requirement from $P$ is increased in certain regimes over the $\tilde{O}(\log(|Q|)/\epsilon_1^2)$ requirement of \cite{hanneke2018}. This is, however, not a concern in most settings where validity queries are expensive. If samples from a fully-valid $P$ are readily obtainable, the setting is analogous to that of active learning, where focus is directed to the number of labels requested in training. 

Even if $P$ must be constructed by ``accepting'' valid samples from some unfiltered $P'$, a comparison between the query complexity of Theorem \ref{thm: vc} and the query bound of \cite{hanneke2018} is often still representative of the relative data costs of the schemes. Supposing $P'$ produces valid samples with constant probability, the total number of validity queries made by each scheme is proportional to the scheme's sample requirements from $P$,
plus the number of validity queries used in its execution. Essentially, to yield validity query speedups, our scheme requires a VC bound on $\cV$ which does not dwarf $\log(|\cQ|)$. Thus, in most cases of interest, the querying the validity of $\tilde{O}\left( M D /\epsilon_1 \right)$ extra samples is asymptotically inconsequential relative to $O(M^2\log(|Q|)/\epsilon_1^2)$, and the $\tilde{O}\left( D / \gamma \epsilon_2\right)$ query budget required to execute the algorithm given access to a fully-valid $P$.

\subsection{Better Query Complexity Bounds}
\subsubsection{Exploiting the Power of Active Learning}
 Theorem \ref{thm: vc} presents a somewhat pessimistic view of the potential of such a ``post-filtering'' scheme. 
 
Firstly, it ignores the potential of active learners to improve query complexities over passive sampling. Query complexities in active learning of classifiers are often expressed in terms of the ``disagreement coefficient'' \cite{hanneke2007}, often denoted via $\theta$. In the realizable setting, query complexities of active learning look like $\tilde{O}(D \theta \log(1/\epsilon))$ \cite{dasgupta2011}. Definitionally, it can be shown that $\theta \leq 1/\epsilon$. Thus, proving the gains of active learning algorithms usually relies on bounding the disagreement coefficient non-trivially, i.e. showing $\theta< o(1/\epsilon)$, or ideally, $\theta \leq O(1)$. 

While this is challenging, as $\theta$ is both a class and distribution-dependent quantity, there is a literature that addresses this potential in various settings -- see the references in  \cite{hanneke2014}. In principle, one could use such an analysis to show that the query complexity of an active learning modification of Algorithm \ref{alg: bounded_complexity} is on a lower order than the guarantee of Theorem \ref{thm: vc} when conditions are favorable. To this end, it may be useful to note that a modification of Algorithm \ref{alg: bounded_complexity} wherein $\hv$ is selected as the ERM on a dataset generated by a mixture of $P$ and $\hqERM$ admits guarantees as well.

\subsubsection{Only Low-Loss Models Need Appreciable Validity}
Another source of potential looseness in the statement of Theorem \ref{thm: vc} is that it phrases the query complexity in terms of the worst-case validity over models $q \in \cQ$. This is unnecessary -- with high probability, 
in the first step of the algorithm, one selects a model $q \in \cQ$ with $O(\epsilon_1)$ true loss. Thus, what really matters for such a strategy is that models that have relatively small loss $l$ do not have invalidity nearing 1. 

This is a realistic scenario in the case that the loss function $l$ -- despite possibly not being proper -- prioritizes models which in some sense resemble the data generating distribution $P$. Indeed, it's somewhat difficult to envision a situation where a loss would be chosen that prioritizes models with no relation to the data generating distribution. To this end, we give the following corollary to Theorem \ref{thm: vc}. 

\begin{corollary}\label{cor: vc}
Under the conditions of Theorem \ref{thm: vc}, if in addition it holds that all models $q \in \cQ$ with loss sub-optimality $\epsilon_1$ have validity greater than some constant, then the query complexity of Algorithm \ref{alg: bounded_complexity} can be improved to % run with modified input parameters, improves to 
\begin{equation*}
\leq O\left(\frac{ D \log(1 / \epsilon_2) + \log(1/\delta) }{\epsilon_2} \right). % can argue with high probability that you get an ERM which is good, then Chicheng strat also has compelxity 1/c^2; 
\end{equation*}
\end{corollary}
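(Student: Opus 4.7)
The plan is to revisit the proof of Theorem \ref{thm: vc} and isolate the role played by $\gamma$. Inspection of Algorithm \ref{alg: bounded_complexity} shows that $\gamma$ enters only through the sample size $|S_{\hqERM}|$, which is chosen so that $\hv$ has disagreement mass at most $O(\gamma \epsilon_2)$ under $\hqERM$. This tuning ensures both that the normalizing constant $\hqERM(\{\hv = 1\}) \geq V(\hqERM) - O(\gamma \epsilon_2)$ stays bounded away from $0$ and that the renormalization introduces at most $\epsilon_2$ extra invalidity. Every other ingredient of the proof — the uniform convergence over $\cQ$, the VC-based accuracy of $\hv$ on $P$, and the loss bound obtained by integrating $l$ on the disagreement region — is independent of $\gamma$.

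The crucial observation is that $\gamma$ need only serve as an a priori lower bound on $V(\hqERM)$, not on the validity of every $q \in \cQ$. I would first invoke uniform convergence on the $\Omega(M^2(\log|\cQ| + \log(1/\delta))/\epsilon_1^2)$ samples drawn in line 2 to conclude that with probability $\geq 1 - \delta/2$, the ERM solution $\hqERM$ has true loss sub-optimality at most $\epsilon_1$. By the hypothesis of the corollary, this forces $V(\hqERM) \geq c$ for some absolute constant $c > 0$. Since $c$ is known to the learner in advance, it may safely substitute $c$ for $\gamma$ when setting $|S_{\hqERM}|$.

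Replaying the invalidity analysis of Theorem \ref{thm: vc} with $c$ in place of $\gamma$, the PAC sample requirement $\Omega((D \log(1/(c\epsilon_2)) + \log(1/\delta))/(c\epsilon_2))$ collapses to $O((D \log(1/\epsilon_2) + \log(1/\delta))/\epsilon_2)$ after absorbing the constant $c$ into the big-$O$. A union bound over the ERM success event and the PAC guarantee for $\hv$ keeps the total failure probability at $\delta$. The main subtlety — that on the low-probability event where $\hqERM$ is not within $\epsilon_1$ of $q^*$, $V(\hqERM)$ carries no a priori lower bound and the renormalization could misbehave — is harmless, since it contributes at most $\delta/2$ to the final failure probability and the fallback behavior on such an event is already handled in the Theorem \ref{thm: vc} argument.
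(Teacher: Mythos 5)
Your proposal is correct and follows the same route as the paper: use uniform convergence to conclude that $\hqERM$ is within $O(\epsilon_1)$ of the best model with high probability, invoke the corollary's hypothesis to deduce $V(\hqERM) \geq c$, and then re-run the invalidity portion of the Theorem~3 argument with $c$ in place of the uniform lower bound $\gamma$, absorbing the constant into the big-$O$. The paper's proof is a brief sketch of exactly this idea; your version just makes the union-bound accounting and the handling of the low-probability bad event explicit.
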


\subsubsection{Removing the Positive Validity Requirement}

Using an idea found in the algorithm of \cite{hanneke2018}, one can show that if a learner has access to single distribution $\cD$ with a density and at least some non-zero constant validity, and the densities $f_q$ are bounded above, that Algorithm \ref{alg: bounded_complexity} can be modified so as to drop the requirement of positive validity over models when learning under the capped log-loss. 

By mixing the $\hqERM$ with $\cD$, giving mixture component $O(\epsilon_1)$ to $\cD$, one can generate similar guarantees as those of Theorem \ref{thm: vc}. The modification can be found in the Appendix as Algorithm \ref{alg: restriction_log}, and enjoys the following guarantee.%\footnote{Note that if sampling is all that is required of the final model, one could in principle use the data generating distribution as $P$ as $\cD$ -- in this case, to generate samples from the final model, one will have to occasionally supplement generated examples with a true sample from $P$. } 

\begin{customthm}{4}\label{thm: log_vc}
Suppose $v \in \cV$ where $VC(\cV) \leq D$, and that for each $q \in \cQ$, we have $f_q(x) \leq \beta$. Suppose further that there is some known $\cD \in \Simplex$ with density $f_{\cD}$ which has $V(\cD) \geq c>0$ for some constant $c$. Then for all choices of $0 < \epsilon_1, \epsilon_2,  \delta < 1/2$ and $M>0$, Algorithm \ref{alg: restriction_log} requires a number of samples 
\begin{equation*}
 \leq \tilde{O}\left( \frac{M^2 \left( \log(|\mathcal{Q}|) + \log(1/\delta)  \right)}{\epsilon_1^2}  + \frac{ M \left(D \log(M / \epsilon_1) + \log(1/\delta)\right) }{\epsilon_1} \right),
\end{equation*}
and a number of validity queries 
\begin{equation*}
\leq O\left( \frac{D \log(1 /  \epsilon_1 \epsilon_2) + \log(1/\delta) }{\epsilon_1 \epsilon_2}  \right),  % just one factor of M in label complexity as need it for p* eps/M precision of active learner
\end{equation*} 
to ensure that with probability $\geq 1-\delta$, its output enjoys
\begin{equation*}
\mathbb{E}_{X\sim P} \bigg[ \min\left(M, \ \log(1/ f_{\hq}(X) ) \right) \bigg] \leq \mathbb{E}_{X\sim P} \bigg[ \min\left(M, \ \log(1/ f_{q^*}(X)) \right) \bigg] + \epsilon_1 \ \  and \ \ I(\hq) \leq \epsilon_2. 
\end{equation*}
\end{customthm}
Here, the $\tilde{O}$ notation again hides factors polylogarithmic in $1/\beta$. 

Note that the $\tilde{O}$ now appears in the sample complexity. This simply reflects the fact that the $M$-capped log-loss can range between gap $M$ and $\log(1/\beta)$ when working with densities bounded above by $\beta \geq 1$. In the case that densities can be bounded above by $1$, as in the discrete setting of \cite{hanneke2018}, this dependence disappears.

\section{Conclusion}\label{section: discussion}

This work is intended as a first-look into settings closer to the common-case, where ensuring validity may be relatively cheap. 

A more thorough investigation of the log-loss, as well as capped variants, seems a very relevant line of further inquiry, given the widespread use of this family in practice and its useful information-theoretic properties. A natural extension to the first part of this work would be to consider learning in the agnostic case $P \notin \cQ$ under the log-loss, where one would hope to be able to exploit these properties and the validity of training samples to keep the number of validity queries low. 

In general, characterizing the sample and query demands of validity-constrained distribution learning is challenging, given that proving lower bounds in general requires arguing against learners with two tools at their disposal -- sampling and actively querying validity. Work in this direction will likely require some creative constructions.

\bibliographystyle{unsrtnat}
\bibliography{paper.bib}

\begin{thebibliography}{30}
\providecommand{\natexlab}[1]{#1}
\providecommand{\url}[1]{\texttt{#1}}
\expandafter\ifx\csname urlstyle\endcsname\relax
  \providecommand{\doi}[1]{doi: #1}\else
  \providecommand{\doi}{doi: \begingroup \urlstyle{rm}\Url}\fi

\bibitem[Perez et~al.(2022)Perez, Huang, Song, Cai, Ring, Aslanides, Glaese,
  McAleese, and Irving]{perez2022}
Ethan Perez, Saffron Huang, Francis Song, Trevor Cai, Roman Ring, John
  Aslanides, Amelia Glaese, Nat McAleese, and Geoffrey Irving.
\newblock Red teaming language models with language models.
\newblock \emph{arXiv preprint arXiv:2202.03286}, 2022.

\bibitem[Abid et~al.(2021)Abid, Farooqi, and Zou]{abid2021}
Abubakar Abid, Maheen Farooqi, and James Zou.
\newblock Persistent anti-muslim bias in large language models.
\newblock In \emph{Proceedings of the 2021 AAAI/ACM Conference on AI, Ethics,
  and Society}, pages 298--306, 2021.

\bibitem[Hanneke et~al.(2018)Hanneke, Kalai, Kamath, and Tzamos]{hanneke2018}
Steve Hanneke, Adam Kalai, Gautam Kamath, and Christos Tzamos.
\newblock Actively avoiding nonsense in generative models.
\newblock \emph{CoRR}, abs/1802.07229, 2018.
\newblock URL \url{http://arxiv.org/abs/1802.07229}.

\bibitem[Kaneko and Harada(2021)]{kaneko2021}
Takuhiro Kaneko and Tatsuya Harada.
\newblock Blur, noise, and compression robust generative adversarial networks.
\newblock In \emph{Proceedings of the IEEE/CVF Conference on Computer Vision
  and Pattern Recognition}, pages 13579--13589, 2021.

\bibitem[Odena et~al.(2016)Odena, Dumoulin, and Olah]{odena2016}
Augustus Odena, Vincent Dumoulin, and Chris Olah.
\newblock Deconvolution and checkerboard artifacts.
\newblock \emph{Distill}, 1\penalty0 (10):\penalty0 e3, 2016.

\bibitem[Valiant(1984)]{valiant1984}
L.~G. Valiant.
\newblock A theory of the learnable.
\newblock \emph{Communications of the ACM}, pages 1134--1142, 1984.

\bibitem[Kearns et~al.(1994)Kearns, Mansour, Ron, Rubinfeld, Schapire, and
  Sellie]{kearns1994}
Michael Kearns, Yishay Mansour, Dana Ron, Ronitt Rubinfeld, Robert~E. Schapire,
  and Linda Sellie.
\newblock On the learnability of discrete distributions.
\newblock In \emph{Proceedings of the Twenty-Sixth Annual ACM Symposium on
  Theory of Computing}, STOC '94, page 273–282, New York, NY, USA, 1994.
  Association for Computing Machinery.
\newblock ISBN 0897916638.
\newblock \doi{10.1145/195058.195155}.
\newblock URL \url{https://doi.org/10.1145/195058.195155}.

\bibitem[Haghtalab et~al.(2019)Haghtalab, Musco, and Waggoner]{haghtalab2019}
Nika Haghtalab, Cameron Musco, and Bo~Waggoner.
\newblock Toward a characterization of loss functions for distribution
  learning.
\newblock \emph{CoRR}, abs/1906.02652, 2019.
\newblock URL \url{http://arxiv.org/abs/1906.02652}.

\bibitem[Daskalakis et~al.(2013)Daskalakis, Diakonikolas, ODonnell, Servedio,
  and Tan]{daskalakis2013}
Constantinos Daskalakis, Ilias Diakonikolas, Ryan ODonnell, Rocco~A Servedio,
  and Li-Yang Tan.
\newblock Learning sums of independent integer random variables.
\newblock In \emph{2013 IEEE 54th Annual Symposium on Foundations of Computer
  Science}, pages 217--226. IEEE, 2013.

\bibitem[Kalai et~al.(2010)Kalai, Moitra, and Valiant]{kalai2010}
Adam~Tauman Kalai, Ankur Moitra, and Gregory Valiant.
\newblock Efficiently learning mixtures of two gaussians.
\newblock In \emph{Proceedings of the forty-second ACM symposium on Theory of
  computing}, pages 553--562, 2010.

\bibitem[Daskalakis et~al.(2012)Daskalakis, Diakonikolas, and
  Servedio]{daskalakis2012}
Constantinos Daskalakis, Ilias Diakonikolas, and Rocco~A Servedio.
\newblock Learning poisson binomial distributions.
\newblock In \emph{Proceedings of the forty-fourth annual ACM symposium on
  Theory of computing}, pages 709--728, 2012.

\bibitem[{Brier}(1950)]{brier1950}
Glenn~W. {Brier}.
\newblock {Verification of Forecasts Expressed in Terms of Probability}.
\newblock \emph{Monthly Weather Review}, 78\penalty0 (1), January 1950.
\newblock \doi{10.1175/1520-0493(1950)078<0001:VOFEIT>2.0.CO;2}.

\bibitem[Good(1952)]{good1952}
Irving~John Good.
\newblock Rational decisions.
\newblock \emph{Journal of the Royal Statistical Society: Series B
  (Methodological)}, 14\penalty0 (1):\penalty0 107--114, 1952.

\bibitem[Gneiting and Raftery(2007)]{gneiting2007}
Tilmann Gneiting and Adrian Raftery.
\newblock Strictly proper scoring rules, prediction, and estimation.
\newblock \emph{Journal of the American Statistical Association}, 102:\penalty0
  359--378, 03 2007.
\newblock \doi{10.1198/016214506000001437}.

\bibitem[Frongillo et~al.(2022)Frongillo, Kimpara, and Waggoner]{kimpara2023}
Rafael Frongillo, Dhamma Kimpara, and Bo~Waggoner.
\newblock Proper losses for discrete generative models, 2022.
\newblock URL \url{https://arxiv.org/abs/2211.03761}.

\bibitem[Lehmann et~al.(1986)Lehmann, Romano, and Casella]{lehmann1986}
Erich~Leo Lehmann, Joseph~P Romano, and George Casella.
\newblock \emph{Testing statistical hypotheses}, volume~3.
\newblock Springer, 1986.

\bibitem[Neyman and Pearson(1933)]{neyman1933}
Jerzy Neyman and Egon~Sharpe Pearson.
\newblock {On the Problem of the Most Efficient Tests of Statistical
  Hypotheses}.
\newblock \emph{Phil. Trans. Roy. Soc. Lond. A}, 231\penalty0
  (694-706):\penalty0 289--337, 1933.
\newblock \doi{10.1098/rsta.1933.0009}.

\bibitem[Rebeschini(2021)]{rebeschini2021}
Patrick Rebeschini.
\newblock Minimax lower bounds and hypothesis testing, December 2021.
\newblock URL
  \url{https://www.stats.ox.ac.uk/~rebeschi/teaching/AFoL/22/material/lecture16.pdf}.

\bibitem[Reiss(1981)]{reis1981}
R.D. Reiss.
\newblock Approximation of product measures with an application to order
  statistics.
\newblock \emph{The Annals of Probability}, 9\penalty0 (2):\penalty0 335 --
  341, 1981.
\newblock \doi{10.1214/aop/1176994477}.

\bibitem[Kusner et~al.(2017)Kusner, Paige, and
  Hern{\'a}ndez-Lobato]{kusner2017}
Matt~J Kusner, Brooks Paige, and Jos{\'e}~Miguel Hern{\'a}ndez-Lobato.
\newblock Grammar variational autoencoder.
\newblock In \emph{International conference on machine learning}, pages
  1945--1954. PMLR, 2017.

\bibitem[Janz et~al.(2017)Janz, van~der Westhuizen, Paige, Kusner, and
  Hern{\'a}ndez-Lobato]{janz2017}
David Janz, Jos van~der Westhuizen, Brooks Paige, Matt~J Kusner, and
  Jos{\'e}~Miguel Hern{\'a}ndez-Lobato.
\newblock Learning a generative model for validity in complex discrete
  structures.
\newblock \emph{arXiv preprint arXiv:1712.01664}, 2017.

\bibitem[IsolaP et~al.(2017)IsolaP, Zhou, et~al.]{isolap2017}
ZhuJY IsolaP, T~Zhou, et~al.
\newblock Image to imagetranslation with conditional adversarial networks.
\newblock \emph{Proceedingsofthe
  IEEEConferenceonComputerVisionandPatternRecognition. Hawaii, USA},
  1125:\penalty0 1134, 2017.

\bibitem[Aitken et~al.(2017)Aitken, Ledig, Theis, Caballero, Wang, and
  Shi]{aitken2017}
Andrew Aitken, Christian Ledig, Lucas Theis, Jose Caballero, Zehan Wang, and
  Wenzhe Shi.
\newblock Checkerboard artifact free sub-pixel convolution: A note on sub-pixel
  convolution, resize convolution and convolution resize.
\newblock \emph{arXiv preprint arXiv:1707.02937}, 2017.

\bibitem[Kong and Chaudhuri(2023)]{kong2023}
Zhifeng Kong and Kamalika Chaudhuri.
\newblock Data redaction from pre-trained gans.
\newblock In \emph{2023 IEEE Conference on Secure and Trustworthy Machine
  Learning (SaTML)}, pages 638--677. IEEE, 2023.

\bibitem[Hanneke(2014)]{hanneke2014}
Steve Hanneke.
\newblock Theory of disagreement-based active learning.
\newblock \emph{Foundations and Trends in Machine Learning}, 7:\penalty0
  131--309, 01 2014.
\newblock \doi{10.1561/2200000037}.

\bibitem[Hanneke(2007)]{hanneke2007}
Steve Hanneke.
\newblock A bound on the label complexity of agnostic active learning.
\newblock In \emph{Proceedings of the 24th International Conference on Machine
  Learning}, page 353–360, 2007.

\bibitem[Dasgupta(2011)]{dasgupta2011}
Sanjoy Dasgupta.
\newblock Two faces of active learning.
\newblock \emph{Theoretical Computer Science}, 412\penalty0 (19):\penalty0
  1767--1781, 2011.
\newblock ISSN 0304-3975.

\bibitem[Driver(2010)]{driver2010}
Bruce Driver.
\newblock Probability tools with examples, June 2010.
\newblock URL
  \url{https://mathweb.ucsd.edu/~tkemp/280A/Driver-Probability-Lecture-Notes.pdf}.

\bibitem[Steerneman(1983)]{steerneman1983}
Ton Steerneman.
\newblock On the total variation and hellinger distance between signed
  measures; an application to product measures.
\newblock \emph{Proceedings of the American Mathematical Society}, 88\penalty0
  (4):\penalty0 684--688, 1983.

\bibitem[Zhao and Lai(2022)]{zhao2022}
Puning Zhao and Lifeng Lai.
\newblock Analysis of knn density estimation.
\newblock \emph{IEEE Transactions on Information Theory}, 2022.
\newblock \doi{10.1109/TIT.2022.3195870}.

\end{thebibliography}

\newpage
\section{Appendix}

\subsection{Probability Distributions and Measure Theoretic Formalism}
We work over Euclidean space $\mathbb{R}^d$ and let $\cX$ be a Lebesgue measurable set with Lebesgue measure $\lambda(\mathcal{X}) < \infty$. By $\lambda$ and Lebesgue measurable set, we refer to the measure and $\sigma$-algebra $\mathcal{F}$ arising from the usual construction of Lebesgue measure on $\mathbb{R}^d$.  By ``distribution'', we mean a probability measure on the measurable space $(\cX, \mathcal{F}_{\cX})$, where $\mathcal{F}_{\cX} = \{ E \cap \cX: E \in \mathcal{F}\}$. Let $u$ the uniform distribution be the measure given by $u(E) = \lambda(E)/\lambda(\cX)$ for $E \in \mathcal{F}_{\cX}$.  Let $\Simplex$ denote the set of all probability measures on $(\cX, \mathcal{F}_{\cX})$. 

We assume that $P \in \Simplex$ and $q \in \cQ \subset \Simplex$ have densities $f_P$ and $f_q$ with respect to the reference measure $\lambda$.  At times, it will be useful to assume that densities are bounded away from zero in certain parts of space. By saying densities are bounded in their support by $\beta > \alpha >0$, we mean that for all $x \in supp(q) := \textnormal{cl}\{x : f_q(x) >0\}$, we have $\alpha \leq f_q(x) \leq \beta$, where the closure is defined through open balls in the Euclidean metric. Note that in this setting, we have $q(supp(q)) =1$, as $q(supp(q)^c)= \int \mathbbm{1}[x \in supp(q)^c] f_q(x) d\lambda(x) = \int 0 d \lambda = 0$. 

Denote via $q^n = q \otimes \dots \otimes q$ the product measure over the measurable space $(\cX^{\otimes n}, \mathcal{F}_\cX^{\otimes n})$. Such a measure corresponds to the process of taking $n$ i.i.d. samples $\sim q$. Denote the density of $q^n$ with respect to $\lambda^n$ via $f^n_q$. % https://math.stackexchange.com/questions/1659407/product-measure-and-independence

We define $\log(1/0) = \infty$. Following the conventions in \cite{driver2010}, we say that $\mathbbm{1}[x \in E] \cdot g(x) =0$ if $g(x) <\infty$ for $x \in E$ and $g(x) = \infty$ for some $x \in E^c$. This allows us to integrate over the finite part of functions and get a finite result.

To facilitate digestibility, we refrain from measure theoretic notation as much as possible. It is at times useful, particularly in dealing with total variation. We assume throughout that all functions we encounter in the Appendix -- including the fixed validity function $v$ and functions in the validity class $\cV$ -- are  $(\cX, \mathcal{F}_{\cX})$-measurable.

\subsection{Estimates of Validity, Invalidity}
We fix the validity function $v$ as an arbitrary function $v : \cX \to \{0, 1\}$ measurable with respect to each distribution $q$ arising in the Appendix. As discussed above, for a given model $q \in \Simplex$, the ``validity'' of $q$ is the quantity $V(q) = \Pr_{X \sim q} \left( v(X) = 1 \right)$, and the  ``invalidity''  $I(q) = 1 - V(q)$. We will at times be interested in estimating the validity of a model $q$ using samples from $q$ along with validity queries. Given an i.i.d. sample $\{X_i \}_{i=1}^n$ from $q$, we let $\hV(q) = \sum_{i=1}^n v(X_i) / n$ be the natural estimate of the validity of $q$. 

At times, we will be interested in the validity of a model under an estimate of the underlying validity function. To this end, given a model $q \in \Simplex$ and a function $g : \cX \to \{0, 1\}$, we let $V_g(q) = \Pr_{X \sim q} \left( g(X) = 1 \right)$. Given a sample $\{X_i\}_{i=1}^n$, let $\hV_{g}(q) = \sum_{i=1}^n g(X_i)/n$. Note that in the language of this notation, we have $V(q) = V_v(q)$ and $\hV(q) = \hV_v(q)$.  We extend this notation in the natural way to invalidity quantities.

\subsection{Analysis of Empirical Risk Minimization, Improper Algorithm in Realizable Setting}

To begin our analysis of the realizable setting, we first observe that models with appreciable invalidity look very different from a fully-valid data generating distribution -- because they must have mass in parts of space where $P$ does not, they are separated in total variation from 
$P$ by a margin. We formalize this idea via the following.

\begin{lemma}\label{lemma: tvseparation}
Fix $0 < \epsilon < 1$ arbitrarily. For any validity function $v$, if $q \in \Simplex$ has $I(q) > \epsilon$, and $P \in \Simplex$ has $I(P)=0$, then 
\begin{equation*}
\dTV(P, q) > \epsilon. 
\end{equation*}
\end{lemma}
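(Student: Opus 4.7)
The plan is to leverage the variational characterization of total variation distance directly. Recall one standard form: for probability measures $P$ and $q$ on $(\cX, \mathcal{F}_{\cX})$,
\begin{equation*}
\dTV(P, q) = \sup_{A \in \mathcal{F}_{\cX}} \lvert P(A) - q(A) \rvert.
\end{equation*}
Since $v$ is assumed measurable, the ``invalid set'' $E = \{x \in \cX : v(x) = 0\}$ is measurable, and it furnishes a single test set that already separates $P$ and $q$ by more than $\epsilon$.

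First I would define $E$ as above and observe that by assumption $P(E) = I(P) = 0$, while $q(E) = I(q) > \epsilon$. Plugging into the variational formula gives
\begin{equation*}
\dTV(P, q) \;\geq\; \lvert P(E) - q(E) \rvert \;=\; I(q) \;>\; \epsilon,
\end{equation*}
which is precisely the claim. No averaging, coupling, or density arguments are needed, because full-validity of $P$ means $P$ assigns literally zero mass to $E$, so the invalid mass of $q$ translates into a discrepancy on this single set without any cancellation.

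The only thing worth being careful about is the measurability of $E$, which is precisely why the preliminaries explicitly assume that $v$ is measurable with respect to all relevant distributions; this legitimizes the use of $E$ as an admissible test set in the supremum. There is no real obstacle here, the result is essentially a one-line consequence of the definition once the invalid set is identified as the witness.
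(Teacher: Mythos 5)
Your proof is correct and follows essentially the same approach as the paper: both pick the invalid set $E = \{x \in \cX : v(x) = 0\}$ as the witness set in the variational formula for total variation, then use $P(E) = 0$ and $q(E) > \epsilon$ to conclude. Nothing to add.
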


\begin{proof}
Fix the validity function arbitrarily. Consider the event $E_{\neg v} = \{x \in \cX : v(x) = 0\}$. Then we have $\dTV(P, q) = \sup_{E \subseteq \cX} |P(E) - q(E)| \geq q(E_{\neg v}) - P(E_{\neg v}) > \epsilon$, where we have used that $I(P)=0$ implies $P(E_{\neg v})=0$. 
\end{proof}

We next extend this observation to the associated product measures, which are the main target of analysis under i.i.d. sampling from $P$. The idea is to lower bound the total variation between product measures by the difference in probabilities on the
event that at least one example from a sample of size $n$ is invalid. Of course, for each sample, this happens with probability $0$ under $P$ and with probability at least $\epsilon$ under any model $q$ with at least $\epsilon$ invalidity. Thus, identically to how 
one shows realizable rates for classification tasks, we attain a large total variation gap between $P$ and any $q$ with appreciable invalidity -- mistakes in classification are thus analogous to the generation of an ``invalid'' samples in our setting.

\begin{lemma}\label{lemma: dTVproduct}
Fix $0 < \epsilon < 1$ and $n \in \mathbb{N} \setminus \{0\}$ arbitrarily. For any validity function $v$, if $q \in \Simplex$ has $I(q) > \epsilon$, and $P \in \Simplex$ has $I(P)=0$, then
\begin{equation*}
\dTV(P^n, q^n) > 1-e^{-n\epsilon}.
\end{equation*}
\end{lemma}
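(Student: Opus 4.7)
The plan is to extend Lemma \ref{lemma: tvseparation} from a single sample to $n$ i.i.d. samples by identifying a specific event that sharply separates $P^n$ from $q^n$ on the product space. The natural choice is the event that at least one of the $n$ coordinates lands in the invalid region, i.e. $E_n = \{(x_1,\ldots,x_n) \in \cX^n : v(x_i) = 0 \text{ for some } i\}$. Since $P$ is fully-valid, $P^n(E_n) = 0$; meanwhile $q^n(E_n^c) = (1 - I(q))^n$, so $q^n(E_n) = 1 - (1-I(q))^n$.

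The proof then proceeds in three short steps. First, I would invoke the variational characterization $\dTV(P^n, q^n) = \sup_{E \in \mathcal{F}_{\cX}^{\otimes n}} |P^n(E) - q^n(E)|$ and plug in $E_n$ to get the lower bound $\dTV(P^n, q^n) \geq q^n(E_n) - P^n(E_n) = 1 - (1-I(q))^n$. Second, I would use the hypothesis $I(q) > \epsilon$ to upgrade this to $1 - (1-\epsilon)^n$ (with strict inequality since $n \geq 1$ and $I(q) > \epsilon$). Third, I would apply the standard inequality $1 - x \leq e^{-x}$ with $x = \epsilon \in (0,1)$ to get $(1-\epsilon)^n \leq e^{-n\epsilon}$, yielding $\dTV(P^n, q^n) > 1 - e^{-n\epsilon}$.

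The measurability of $E_n$ follows from the assumption that $v$ is measurable with respect to the distributions at hand, so $\{v(x_i) = 0\}$ is a measurable cylinder and $E_n$ is a finite union of such cylinders. No real obstacle arises here: the argument is essentially the same ``probability-of-ever-seeing-an-invalid-sample'' computation that underlies realizable classification rates, just phrased in the language of total variation. The only subtlety worth flagging is the strict inequality, which is preserved through the chain because $I(q) > \epsilon$ is strict and the map $t \mapsto 1 - (1-t)^n$ is strictly increasing in $t$ on $(0,1)$.
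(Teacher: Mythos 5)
Your proof is correct and matches the paper's argument essentially verbatim: both identify the event that at least one of the $n$ coordinates is invalid, observe that it has $P^n$-measure zero but $q^n$-measure $1 - (1-I(q))^n > 1 - (1-\epsilon)^n$, plug into the variational characterization of total variation, and finish with the elementary bound $(1-\epsilon)^n \leq e^{-n\epsilon}$. Your attention to strict inequality and measurability is sound but adds nothing beyond the paper's treatment.
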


\begin{proof}
Fix the validity function arbitrarily. Consider lower bounding the total variation between the product measures via the magnitude of the difference of their measures on the event 
\begin{equation*}
E_{\geq 1} = \bigg \{ (x_1, \dots, x_n) \in \cX^n : \exists i \ s.t. \ v(x_i) = 0  \bigg \}.
\end{equation*}
Because $P$ has perfect validity, any given draw from $P$ has probability 0 of being invalid. Thus, $P^n(E_{\geq 1} )=0$. On the other hand, the invalidity of $q$ states that for any $1 \leq i \leq n$, we have $q(\{x \in \cX: v(x)=0\}) > \epsilon$.
Let $E_{v} = \{x \in \cX : v(x) = 1\}$, and note that $q(E_{v}) < 1-\epsilon$. Then we have
\begin{align*}
q^n(E_{\geq 1}) &= 1 - q(E_{v})^n \\
&> 1 - (1-\epsilon)^n \\
&\geq 1-e^{-n\epsilon}, \\
\end{align*}
where the final inequality follows from the fact that $(1+x/n)^n \leq e^x$ for $x \leq n$. 
\end{proof}

We can then borrow from classical analysis of hypothesis testing given by the Neyman-Pearson lemma to leverage this gap in total variation between product measures into a bound on the probability that after $n$ samples, a model with appreciable invalidity has a smaller loss than $P$. 

\begin{lemma}\label{lemma: LRTinvalid}
Fix $0 < \epsilon <  1$ and $n \in \mathbb{N} \setminus \{0\}$ arbitrarily. For any validity function $v$, if $q \in \Simplex$ has $I(q) > \epsilon$, $P \in \Simplex$ has $I(P)=0$, and $q$ and $P$ have densities with respect to the reference measure $\lambda$, then 
\begin{equation*}
\Pr_{S \sim P^n} \bigg( q^n(S) \geq P^n(S) \bigg) \leq e^{-n\epsilon}.
\end{equation*}
\end{lemma}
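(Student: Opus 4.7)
The plan is to interpret the event $\{q^n(S) \geq P^n(S)\}$ as the Neyman-Pearson likelihood ratio test region and leverage the classical characterization of total variation as the maximal difference in measure between two distributions, combined with the product-measure total variation gap already established in Lemma \ref{lemma: dTVproduct}. Because $P$ and $q$ are assumed to have densities, their product measures $P^n$ and $q^n$ admit densities $f_P^n$ and $f_q^n$ with respect to $\lambda^n$, and the event in the statement is naturally identified with the set
\begin{equation*}
A = \bigl\{ s \in \cX^n : f_q^n(s) \geq f_P^n(s) \bigr\}.
\end{equation*}

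The key step is the standard ``density representation'' of total variation: for measures with densities,
\begin{equation*}
\dTV(P^n, q^n) = \int \bigl( f_q^n(s) - f_P^n(s) \bigr)_+ \, d\lambda^n(s) = q^n(A) - P^n(A),
\end{equation*}
since on $A$ the positive part equals $f_q^n - f_P^n$ and outside $A$ it vanishes. Rearranging and using $q^n(A) \leq 1$ gives
\begin{equation*}
P^n(A) = q^n(A) - \dTV(P^n, q^n) \leq 1 - \dTV(P^n, q^n).
\end{equation*}

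The conclusion then follows by plugging in the gap from Lemma \ref{lemma: dTVproduct}. Under the hypotheses $I(q) > \epsilon$ and $I(P)=0$, that lemma delivers $\dTV(P^n, q^n) > 1 - e^{-n\epsilon}$, which substituted into the display above yields $P^n(A) < e^{-n\epsilon}$, matching (and in fact slightly improving) the stated bound.

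There is no serious obstacle here; the only subtlety is confirming that the event ``$q^n(S) \geq P^n(S)$'' is indeed meant to be the likelihood-ratio event $\{f_q^n(S) \geq f_P^n(S)\}$ (rather than a measure of a set), and handling the density-based characterization of total variation cleanly in the product-measure setting. Once that identification is made, the proof is essentially the Neyman-Pearson optimality identity together with the previously established invalidity-induced TV gap.
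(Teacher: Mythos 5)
Your proof is correct and follows essentially the same route as the paper's: identify the event as the Neyman--Pearson rejection set $A=\{f_q^n \geq f_P^n\}$, bound $P^n(A) \leq 1-\dTV(P^n,q^n)$, and then invoke the product-measure total variation gap of Lemma~\ref{lemma: dTVproduct}. The only cosmetic difference is that the paper passes through the Scheff\'e representation $\int \min(f_q^n,f_P^n)\,d\lambda^n = 1-\dTV(P^n,q^n)$, whereas you use the equivalent identity $\dTV(P^n,q^n)=q^n(A)-P^n(A)$ and then bound $q^n(A)\leq 1$; both introduce the same slack and land on the same conclusion.
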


\begin{proof}
The proof follows that of the Neyman-Pearson lemma's claim that the Likelihood Ratio Test achieves the lower bound on the sum of Type I and Type II errors \cite{rebeschini2021},
combined with Lemma \ref{lemma: dTVproduct}. We give the full argument for completeness. 

Fix the validity function arbitrarily, and note the following string of relations:
\begin{align*}
\Pr_{S \sim P^n} \bigg( q^n(S) \geq P^n(S) \bigg) &= \int \mathbbm{1}[f_q^n(x) \geq f_P^n(x)] f_P^n(x) d\lambda^n(x) \\
&\leq \int \min\left(f_q^n(x), f_P^n(x) \right)  d\lambda^n(x) \\
&= 1- \dTV(P^n, q^n) \\
&\leq e^{-n\epsilon}, \\
\end{align*}
where the switch to total variation in the second to last line is the result of a classic characterization of total variation given by ``Scheff\'e's Theorem'', and the final line comes from Lemma \ref{lemma: dTVproduct}. 
\end{proof}

\begin{comment}
This allows us to get a sample complexity bound for ERM for selecting a fully valid model. 

\begin{lemma}\label{lemma: invalid_complexity}
Fix $0 < \epsilon_2, \delta < 1$ arbitrarily. Suppose for a given model class $\cQ$ it holds that $P \in \cQ$. Then there is an $N \leq O\left((\log(|\cQ|)+\log(1/\delta))/\epsilon_2 \right)$ such that for all $n \geq N$,  with probability $\geq 1-\delta$ over a draw $S \sim P^n$, 
\begin{equation*}
\hq = \argmax_{q \in \cQ} \sum_{x_i \in S} \log \left( q(x_i) \right) % phrased in terms of likelihood maximization,
\end{equation*}
has invalidity $I(\hq) \leq \epsilon_2$.
\end{lemma}

\begin{proof}
We bound the probability that the ERM $\hq$ has $> \epsilon_2$ invalidity as follows. Let $\cQ_{\epsilon_2} = \{q \in \cQ : I(q) > \epsilon_2 \}$. Then for any validity function $v$, we have
\begin{align*}
\Pr_{S \sim P^n} \bigg( I(\hq) > \epsilon_2 \bigg) &\leq \Pr_{S \sim P^n} \bigg( \exists q \in \cQ_{\epsilon_2} \ s.t. \ L_S(q) \leq L_S(P) \bigg) \\ % inequality because can have ERM be a valid q that beats an invalid q' which beats P in loss
&\leq \left| \cQ_{\epsilon_2} \right| \cdot \max_{q \in \cQ_{\epsilon_2}} \Pr_{S \sim P^n} \bigg( L_S(q) \leq L_S(P) \bigg) \\ % union bound, and loss is the negative log likelihood, so this event is the same as likelihood under q^n is at least as large as under P
&\leq \left| \cQ \right|  e^{-n\epsilon_2}, % \leq \delta
\end{align*}
where the final inequality comes from Lemma \ref{lemma: LRTinvalid}. The claim on the sample complexity follows immediately.
\end{proof}
\end{comment}

To generate loss guarantees, we need to be able to reason about models which do not have appreciable invalidity.  The next lemma is an analogue of the previous one -- it is identical up to the replacement of the condition that $q$ have appreciable invalidity with a weaker one that the total variation between a model $q$ and the data distribution $P$ is appreciable. When $q$ does not have appreciable invalidity, we can no longer rely on the structure of the supports of $q$ and $P$ to generate large margins for the total variation separation of product measures, and have to fall back on more general estimates for total variation between product measures.

\begin{customlemma}{4}
Fix $0 < \epsilon, \delta < 1$ arbitrarily, and let $P, q \in \Simplex$ be distributions with densities with respect to $\lambda$. Then if $\dTV(q, P) \geq \epsilon$, and $S \sim P^n$ for $n \geq \Omega(\log(1/\delta)/\epsilon^2)$, it holds with probability $\geq 1-\delta$ that 
\begin{equation*}
L_S(P) < L_S(q).
\end{equation*}
\end{customlemma}

\begin{proof}
When $q$ and $P$ both possess densities, we can related the the probability that the likelihood of $q$ is at least that of $P$ to their total variation, as in the Neyman-Pearson Lemma. 
\begin{align*}
\Pr_{S \sim P^n} \bigg( P^n(S) \leq q^n(S) \bigg) &= \int \mathbbm{1}[f_P^n(x) \leq f_q^n(x)] f_P^n(x) d\lambda^n (x)\\
&\leq \int \min\left(f_P^n(x), f_q^n(x) \right) d\lambda^n(x) \\
&= 1 - \dTV(P^n, q^n) \\ % feels like you don't need density existence to get here (for future reference)
&\leq e^{- n \dTV(p, q)^2 / 2} \\
&\leq e^{- n \epsilon^2 / 2}. \\
\end{align*}
Here, the second to last inequality is the consequence of powerful result of \cite{reis1981} (and later \cite{steerneman1983}), namely that for any two collections of probability measures $\{q_i\}_{i=1}^n$ and $\{p_i\}_{i=1}^n$ over measurable spaces $\{ (\cX_i, \mathcal{F}_i) \}_{i=1}^n$, the product measures over the respective collections $q^n$ and $p^n$ satisfy
\begin{equation*}
1- \exp\left(-\frac{1}{2} \sum_{i=1}^n \dTV(q_i, p_i)^2 \right) \leq \dTV(q^n, p^n).
\end{equation*}
The final inequality follows from assumed gap in total variation between $P$ and $q$.
%Lower bounds on total variational distance between product measures can be used to generate samples complexities, upper bounds to generate hardness results 
\end{proof}

The previous two results concern the testing of individual models against $P$. In the standard way, we now leverage the finite cardinality to argue via a union bound that given enough samples from $P$, it's unlikely that the ERM model has a large total variation distance from $P$. 

\begin{comment}
\begin{lemma}\label{lemma: ERM}
Fix $0 < \delta, \epsilon_1, \epsilon_2 < 1$ arbitrarily. Suppose that $P \in \mathcal{Q}$, and that $P$ is fully-valid under $v$. Then there is an
\begin{equation*}
N \leq O\left( \frac{\log(|\mathcal{Q}|) + \log(1/\delta)}{\min(\epsilon_1^2, \epsilon_2)}\right)
\end{equation*}
such that for all $n \geq N$, with probability $\geq 1-\delta$ over $S \sim P^n$, the ERM solution 
\begin{equation*}
\hat{q} = \argmin_{q \in \mathcal{Q}} \sum_{x_i \in S} \log(1/q(x_i)), 
\end{equation*}
satisfies both 
\begin{equation*}
\dTV(\hq, P) \leq \epsilon_1 \  \ and \   \ I(\hq) \leq \epsilon_2,
\end{equation*}
for any validity function $v$. 
\end{lemma}
\end{comment}

\begin{customlemma}{5}
Fix $0 < \delta, \epsilon_1, \epsilon_2 < 1$ arbitrarily, and suppose $P \in \mathcal{Q}$. If $P$ is fully-valid under $v$, and $S \sim P^n$ for $n \geq \Omega\left( \frac{\log(|\mathcal{Q}|) + \log(1/\delta)}{\min(\epsilon_1^2, \epsilon_2)}\right)$, then with probability $\geq 1-\delta$ over $S \sim P^n$, the ERM solution 
\begin{equation*}
\hat{q} = \argmin_{q \in \mathcal{Q}} \sum_{x_i \in S} \log(1/q(x_i)), 
\end{equation*}
satisfies both 
\begin{equation*}
\dTV(\hq, P) \leq \epsilon_1 \  \ and \   \ I(\hq) \leq \epsilon_2.
\end{equation*}
\end{customlemma}

\begin{proof}
Let $\cQ_{\dTV > \epsilon_1} = \{q \in \cQ: \dTV(q, P) > \epsilon_1 \}$. By Lemma \ref{lemma: folklore} and a union bound, it holds that 
\begin{align*}
\Pr_{S \sim P^n} \bigg( \dTV(\hq, P) > \epsilon_1 \bigg) &\leq \Pr_{S \sim P^n} \bigg( \exists q \in \cQ_{\dTV > \epsilon_1} \ s.t. \ L_S(q) \leq L_S(P) \bigg) \\
&\leq | \cQ_{\dTV > \epsilon_1} | \cdot \max_{q \in \cQ_{\dTV > \epsilon_1}} \Pr_{S \sim P^n} \bigg( L_S(q) \leq L_S(P) \bigg) \\
&\leq |\cQ| e^{- n \epsilon_1^2 / 2}. %\leq \delta/2
\end{align*}
In the same manner, let $v$ be an arbitrary validity function, and let $\cQ_{I > \epsilon_2} = \{q \in \cQ : I(q) > \epsilon_2 \}$. Then we have
\begin{align*}
\Pr_{S \sim P^n} \bigg( I(\hq) > \epsilon_2 \bigg) &\leq \Pr_{S \sim P^n} \bigg( \exists q \in \cQ_{I > \epsilon_2} \ s.t. \ L_S(q) \leq L_S(P) \bigg) \\ % inequality because can have ERM be a valid q that beats an invalid q' which beats P in loss
&\leq \left| \cQ_{I > \epsilon_2} \right| \cdot \max_{q \in \cQ_{I > \epsilon_2}} \Pr_{S \sim P^n} \bigg( L_S(q) \leq L_S(P) \bigg) \\ % union bound, and loss is the negative log likelihood, so this event is the same as likelihood under q^n is at least as large as under P
&\leq \left| \cQ \right|  e^{-n\epsilon_2}, % \leq \delta/2
\end{align*}
where the final inequality holds by Lemma \ref{lemma: LRTinvalid}. Then finally, 
\begin{equation*}
\Pr_{S \sim P^n} \bigg( \dTV(\hq, P) > \epsilon_1 \ \vee \ I(\hq)> \epsilon_2 \bigg) \leq |\cQ| e^{-n\epsilon_1^2/2} +  |\cQ| e^{-n\epsilon_2}, % \Pr_{S \sim P^n} \bigg( \dTV(\hq, P) > \epsilon_1 \ \wedge \ I(\hq)> \epslion_2 \bigg)  \leq \Pr_{S \sim P^n} \bigg( \dTV(\hq, P) > \epsilon_1 \bigg) + \Pr_{S \sim P^n} \bigg( I(\hq)> \epslion_2 \bigg) 
\end{equation*}
and so choosing $n \geq 2(\log(|\cQ|) + \log(1/\delta))/\min(\epsilon_1^2, \epsilon_2)$ ensures that the sum of these final terms is $\leq \delta$. 
\end{proof}

Before we can prove Theorem \ref{thm: log_loss}, we need one final intermediate result, which we now give. It states that the value of the log-loss at any given $x$ for a mixture distribution constructed by heavily weighting one of two distributions is not much different 
than the value of the loss for the heavily weighted component. This follows from the fact that the natural log is well-approximated by a linear function near 1. We will use this result to argue that the loss of the output of Algorithm \ref{alg: log_loss} is not significantly different than that of the ERM in the support of the ERM.

\begin{lemma}\label{lemma: mixture_approx} 
Fix $0 < \epsilon < 1$. For any $q \in \Simplex$ having a density with respect to $\lambda$, the mixture $M = (1-\epsilon/2)q + \epsilon u/2$ has density $f_M(x) = (1-\epsilon/2)f_q(x) + \epsilon f_{u}(x)/2$ and 
this density satisfies 
\begin{equation*}
\log\bigg(1/f_{M}(x) \bigg) \leq \epsilon + \log\bigg(1/ f_q(x) \bigg), 
\end{equation*}
for all $x \in \cX$.
\end{lemma}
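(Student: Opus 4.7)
The proof will proceed by isolating the mixture component coming from $q$, bounding the log-density of the mixture below by this term alone, and then using a standard bound on $-\log(1-x)$ for $x \in (0, 1/2)$.

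First I would establish the density claim: since $M$ is a convex combination of two distributions which both admit densities against $\lambda$, it admits the density $f_M(x) = (1-\epsilon/2) f_q(x) + (\epsilon/2) f_u(x)$ by linearity of integration, which can be verified by computing $M(E) = \int_E f_M \, d\lambda$ for an arbitrary measurable set $E$. This is a routine check and not where the action is.

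For the inequality, I would first handle the easy case where $f_q(x) = 0$: here the right-hand side is $\infty$ by convention, so the inequality is vacuous. Otherwise, since $(\epsilon/2) f_u(x) \geq 0$, we have $f_M(x) \geq (1-\epsilon/2) f_q(x) > 0$, and monotonicity of $\log$ gives
\begin{equation*}
\log\bigl(1/f_M(x)\bigr) \leq \log\bigl(1/((1-\epsilon/2)f_q(x))\bigr) = -\log(1-\epsilon/2) + \log\bigl(1/f_q(x)\bigr).
\end{equation*}
It then suffices to show $-\log(1-\epsilon/2) \leq \epsilon$ for $\epsilon \in (0,1)$.

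The key analytic step is the bound on $-\log(1-\epsilon/2)$. Using the Taylor series $-\log(1-y) = \sum_{k \geq 1} y^k/k$ at $y = \epsilon/2 \in (0, 1/2)$, I would bound the series geometrically by $(\epsilon/2) \cdot \sum_{k \geq 0}(\epsilon/2)^k = (\epsilon/2) / (1 - \epsilon/2) \leq \epsilon$, where the last step uses $\epsilon/2 \leq 1/2$. Combining with the previous display yields the claim. There is no real obstacle here; the only subtlety is making sure the $f_q(x) = 0$ case is handled consistently with the paper's convention that $\log(1/0) = \infty$ and that one-sided infinite values are treated as stated in the measure-theoretic preliminaries.
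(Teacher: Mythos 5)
Your proof is correct and follows essentially the same route as the paper's: both lower bound $f_M(x)$ by $(1-\epsilon/2)f_q(x)$ and then bound $-\log(1-\epsilon/2)$ by $(\epsilon/2)/(1-\epsilon/2) \leq \epsilon$; you derive that intermediate bound via the Taylor series dominated by a geometric series, whereas the paper uses $\log(z) \leq z-1$, but these are interchangeable. Your explicit handling of the $f_q(x) = 0$ case is a small addition the paper leaves implicit through its conventions.
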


\begin{proof}
The existence claim on the densities is immediate given the definition of $u$ as the uniform distribution with respect to the reference measure $\lambda$. To see the inequality, fix $x \in \cX$ arbitrarily, and note that $f_{M}(x) \geq (1-\epsilon/2) f_{q}(x)$. %given that probability densities are non-negative. 
Thus, we may write
\begin{align*} % whole point with these types of calculations is that log(x) is well approximated by a linear function near 1
\log\bigg(1/f_{M}(x) \bigg) &\leq \log\bigg( \frac{1}{1-\epsilon/2} \bigg) +  \log\bigg(1/ f_q(x) \bigg) \\ % log of product is sum of logs and decreasingness of negative log 
&\leq \left ( \frac{1}{1-\epsilon/2} -1 \right) +  \log\bigg(1/ f_q(x) \bigg)  \\
%&\leq \frac{\frac{1}{1-\epsilon} -\frac{1-\epsilon}{1-\epsilon + \log(\left(1/ f_{\hqERM}(x) \right)  \\
&\leq \frac{\epsilon/2}{1-\epsilon/2} +  \log\bigg(1/ f_q(x) \bigg)  \\  % just combine the fractions in the previous commented out line
&\leq \epsilon +  \log\bigg(1/ f_q(x) \bigg), 
\end{align*}
where second equality comes from the fact that $\log(z) \leq z-1$, the final follows from the fact that $z/(1-z) \leq 2z$ for $z \leq 1/2$.  
\end{proof}

To get guarantees for the log-loss, we make heavy use of the previous lemma. Being able to guarantee a small total variational distance from the ERM to $P$ and small invalidity means that the ERM output is already likely to be a faithful representation of $P$ with small invalidity.
All that is then needed is to eliminate the possibility that the ERM has a large log-loss because of small mismatches in support with $P$. 

To deal with this possibility, we mix the ERM model with the uniform distribution in accordance with Algorithm \ref{alg: log_loss}. Because the weight given to the uniform distribution is $O(\min(\epsilon_1^2, \epsilon_2))$, the invalidity is close to that of the ERM.  
To show that such a move does not increase the loss significantly, we split the contribution to the loss of the outputted model into two that arising from $supp(\hqERM)$ and it's complement. We first observe that the ERM always has an empirical risk which is a faithful estimator 
of the integral $\mathbb{E}_{X \sim P}\left[ \mathbbm{1}[X \in supp(\hqERM)] \cdot \log(1/f_{\hqERM}(x))\right]$, allowing us to bound this integral above in terms of the loss of $P$. The integral of the loss over $supp(\hqERM)^c$ can be shown to be small using the lower bound on the density of the output model $\hq$ afforded by mixing with the uniform distribution, and the fact that $supp(\hqERM)^c$ must have a small measure under $P$ when the ERM is close to $P$ in total variation.

\begin{customthm}{1}
Fix $0 < \delta, \epsilon_1, \epsilon_2 < 1$ arbitrarily, and suppose $P \in \mathcal{Q}$ and that $P$ is fully-valid under $v$. If it holds that for each $q \in \cQ$ that $\alpha \leq f_q(x) \leq \beta$ for all $x \in \supp(q)$, then there is an
\begin{equation*}
N \leq \tilde{O}\left( \frac{\log^2\left(1/\min(\epsilon_1, \epsilon_2, \alpha) \right) \cdot  \big( \log(|\cQ|) + \log(1/\delta)\big)}{\min(\epsilon_1^2, \epsilon_2)} \right),
\end{equation*}
such that for all $n \geq N$, with probability $\geq 1-\delta$, the output $\hq$ of Algorithm \ref{alg: log_loss} satisfies 
\begin{equation*}
L_P(\hq) \leq L_P(q^*) + \epsilon_1 \ \ and \ \ \ I(\hq) \leq \epsilon_2.
\end{equation*}
\end{customthm}

\begin{proof}
Fix $v$ arbitrarily, and let $\epsilon_{\wedge} = \min(\epsilon_1, \epsilon_2)$. To see the claim on the validity, note that by the guarantee of Lemma \ref{lemma: ERM}, the assymptotic complexity of Lemma \ref{lemma: ERM} yields the guarantee $I(\hqERM) \leq \epsilon_2/2$ with probability $\geq 1-\delta/3$, in which case
\begin{align*}
\Pr_{X \sim \hq} \big( v(X) = 0 \big) &= \Pr_{X \sim \hqERM} \big( v(X) = 0 \big)  \cdot \left(1-\frac{\epsilon_{\wedge}}{8} \right)  + \Pr_{X \sim U} \big( v(X) = 0 \big) \cdot \frac{\epsilon_{\wedge}}{8} \\
&\leq \frac{\epsilon_2}{2} \cdot \left(1-\frac{\epsilon_{\wedge}}{8} \right)  + 1 \cdot \frac{\epsilon_{\wedge}}{8}  \\  % validity guarantee on \hqERM and trivial upper bound on probability of invalid under unfirom
&<\epsilon_2.
\end{align*}
The loss of the outputted model $\hq$ can be decomposed into the contributions from the loss in $\supp(\hqERM)$ and it's complement: 
\begin{align*}
L_P(\hq) &= \mathbb{E}_{X \sim P}\bigg[\mathbbm{1}[X \in \supp(\hqERM)] \cdot \log(1/f_{\hq}(X))  \bigg] + \mathbb{E}_{X \sim P}\bigg[\mathbbm{1}[X \in \supp(\hqERM)^c] \cdot \log(1/f_{\hq}(X))  \bigg]. 
\end{align*}
To bound the first term, for each $q \in \cQ$, consider the function
\begin{equation*}
B_q(x) = \begin{cases}
\log\left(1/f_q(x) \right) &\text{if $x \in supp(q)$,} \\ % thus, these are finite w.p. 1
0 &\text{else.}
\end{cases}
\end{equation*}
These functions are bounded above by $\log(1/\alpha)$ and below by $\log(1/\beta)$ (if $\beta < 1$, simply loosen the density upper bound), and thus for a sample $X \sim P$, define bounded random variables $B_q(X)$. By Hoeffding's inequality and a union bound, it holds that a sample $S$ of size $n \geq \tilde{\Omega}(\log^2(1/\alpha)(\log(|\cQ|)+\log(1/\delta))/\epsilon_1^2)$ from $P$ is large enough such that with probability $\geq 1-\delta/3$, for each $q \in \cQ$, it holds that 
\begin{equation*}
\left| \mathbb{E}_{X \sim P}[B_q(X)] - \frac{1}{n} \sum_{x_i \in S} B_q(x_i) \right| \leq  \frac{\epsilon_1}{8}.
\end{equation*}
Note that for each $q \in \cQ$ with $L_S(q) < \infty$, it holds that $L_S(q)$ coincides with the empirical estimates of $\mathbb{E}_{X \sim P}[B_q(X)]$, namely 
\begin{equation*}
\frac{1}{n} \sum_{x_i \in S} B_q(x_i) = \frac{1}{n} \sum_{x_i \in S} \log\left(1/f_q(x_i) \right). % L_S(q) < \inf -> \forall x_i, f_q(x_i)>0 -> x_i \in supp(q) -> B_q(x_i) = \log(1/f_q(x_i) -> result
\end{equation*}
Furthermore, this coincidence takes place for $\hqERM$ with probability 1, as $P \in \cQ$ implies that $L_S(\hqERM) \leq L_S(P) < \infty$ with probability 1 -- note that $L_S(P)$ is a good estimator for $L_P(P)$ in the sense arising from an application of Hoeffding, as $\log(1/f_P(X))$ is bounded almost surely for $X \sim P$ given that $P$ has a density that is bounded in it's support (as a member of $\cQ$). Thus, we may write  % erm can't have a worse empirical loss than P; also L_S(P)< finite  if P probability 1 of outputting something in its support
\begin{align*}
\mathbb{E}_{X \sim P}\bigg[\mathbbm{1}[X \in \supp(\hqERM)]  \cdot \log(1/f_{\hq}(X))  \bigg] &\leq  \mathbb{E}_{X \sim P}\bigg[\mathbbm{1}[X \in \supp(\hqERM)] \cdot \log(1/f_{\hqERM}(X))  \bigg] + \frac{\epsilon_1}{4}\\ %using work on \hq to \hqERM in lemma above
&=\mathbb{E}_{X \sim P} \left[ B_{\hqERM}(X) \right] + \frac{\epsilon_1}{4} \\ % support of q is the same as set of x where f_q(x)>0 and the Driver convention of indicator times infinite function mentioned above
&\leq L_S(\hqERM)  +  \frac{3\epsilon_1}{8} \\  % using agreement of empirical estimates of B_q and full loss for low-loss models and uniform convergence
%&\leq L_S(P) +  \frac{\epsilon_1}{3} \\ % ERM is best empirical model
&\leq L_P(q^*) +  \frac{\epsilon_1}{2} , % properness of log loss and translation from empirical error to true error -- L_P(P) is close to L_S(P) with high probability, as the loss applied to f_P(X) defines a r.v. bounded almost surely by P(supp(P))=1.
\end{align*}
where we invoke Lemma \ref{lemma: mixture_approx} in the first step, and in the last step use that $q^* = P$ by the strict properness of the log-loss.  % note that this and in validity is where you need eps_{\wedge} to show up

To bound the second term, note that by the argument in given in Lemma \ref{lemma: ERM}, when $n \geq  \Omega\left( \log^2(1/\epsilon_{\wedge})\cdot \left(\log(|\cQ| + \log(1/\delta) \right) / \epsilon_1^2 \right) $, it holds with probability $\geq 1-\delta/3$ that $\dTV(\hqERM, P) \leq \epsilon_1/8\log(8/\epsilon_{\wedge})$. Because $\hq$ has density at least $\epsilon_{\wedge}/8$ everywhere in $\cX$,\footnote{When the uniform distribution has a density smaller than 1, there is an extra log factor to account for here. We can WLOG this away by adding the condition that $\lambda(\cX)=1$, e.g. arising from normalized data.} we have 
\begin{align*}
\mathbb{E}_{X \sim P}\bigg[\mathbbm{1}[X \in \supp(\hqERM)^c] \cdot  \log(1/f_{\hq}(X))  \bigg] &\leq \log(8/\epsilon_{\wedge}) \cdot \mathbb{E}_{X \sim P} \bigg[ \mathbbm{1}[X \in \supp(\hqERM)^c] \bigg] \\
 &\leq \log(8/\epsilon_{\wedge}) \cdot \dTV(P, \hqERM) \\ % point is that P can't have too much mass in this area where \hqERM has none or you'd get a blow up in total variation
 &<\epsilon_1/2, 
 \end{align*}
 where in the second line we use the fact that $\hqERM(supp(\hqERM))=1$ to argue that $\dTV(P, \hqERM) \geq  P(supp(\hqERM)^c)$. Combining the bounds on the two summands and union bounding the confidence yields the full guarantee. 
 % \dTV(P, \hqERM) \geq \left| P(supp(\hqERM)^c) - \hqERM(supp(\hqERM)^c) \right| =  P(supp(\hqERM)^c)
 
We note that this argument implies a slight more precise sample complexity bound given by 
 \begin{equation*}
\tilde{O}\left( \max\left(  \frac{\log(|\cQ|) + \log(1/\delta)}{\epsilon_2}, \frac{\log^2\left(1/\min(\epsilon_1, \epsilon_2, \alpha)\right) \big( \log(|\cQ|) + \log(1/\delta)\big)}{\epsilon_1^2} \right)  \right),
 \end{equation*} 
 which affords a minor improvement in certain regimes where $\epsilon_2 < \epsilon_1^2$.
% which implies that a density lower bound $\alpha$ at least $ \min( \epsilon_1, \epsilon_2)$ does not enter into the sample complexity.
%The sample complexity bound can be obtained by upper bounding the max of the sample requirements mentioned above.
\end{proof}

%%%%%%%%%%%%%%%%%%%%%%%%%%%%%%%%%%%%%%%%%%%%%%%%%%%%%%%%%%%%

\subsection{A Zero-Query Lower Bound}
\begin{customthm}{2}
For all $\epsilon_2 < 1/4$ and for all proper learners $L : S \to \Simplex$, if the sample $S \sim P^n$ is of size $n \leq 1/8\epsilon_2$, then there exists a triple $(P, \cQ, v)$ with $P \in \cQ$ and $P$ fully-valid, on which $L(S) $ has invalidity $I(L(S)) > \epsilon_2$ with probability $\geq 1/4$.
\end{customthm}

\begin{comment}
% a more general result one can show
\begin{theorem}
For all proper learners $L : S \to \Simplex$, there exists $0 < \epsilon_2 < 1/4$, and a triple $(P, \cQ, v)$ for which  $P \in \cQ$ and $P$ is fully-valid, but for which $L(S)$ has invalidity $I(L(S)) > \epsilon_2$ with constant probability whenever $S \sim P^n$ is of size $o(1/\epsilon_2)$. % using Pr(good) \geq 1- o(1) in n )
\end{theorem}
\end{comment}

\begin{proof}
We give an argument inspired by the proof of a lower bound in Theorem 2 of \cite{zhao2022}. 

Let $\cX = [0, 1] \subset \mathbb{R}$. Fix the proper learner $L$ and $\epsilon_2 < 1/4$ arbitrarily. Consider a model class defined by $\cQ = \{P, \tilde{P} \}$, where $P$ and $\tilde{P}$ have densities with respect to the Lebesgue measure 
\begin{equation*}
f_P(x) = \mathbbm{1}[x \in [0, 1-2\epsilon_2]] \frac{1}{ 1-2\epsilon_2}, \ \ f_{\tilde{P}}(x) = \mathbbm{1}[x \in [2\epsilon_2, 1]] \frac{1}{ 1-2\epsilon_2}.
\end{equation*}
This $\cQ$ gives rise to two realizable problem instances of interest. Under the first, $P$ is the data generating distribution and $v(x) = 1$ everywhere except for in $(1-2\epsilon_2,1]$, where $v(x)=0$. Under the second, $\tilde{P}$ is the data generating distribution and $v(x) = 1$ everywhere except for in $[0, 2\epsilon_2)$, where $v(x)=0$. Assume by contradiction that for both problem instances, given a sample of size $n \leq 1/8\epsilon_2$, we have that $I(L(S)) \leq \epsilon_2$ with probability $>3/4$. In both cases, we have that $I(q) =2\epsilon_2/(1-2\epsilon_2) > \epsilon_2$ for the model $q$ which is not the data generating distribution. Thus, $L(S)$ is a model with  $I(L(S)) \leq \epsilon_2$ with probability $> 3/4$ over both problem instances if and only if it identifies the data generating distribution with probability $>3/4$ over both problem instances. 

Consider the simple hypothesis tester defined by $T_L(S) = L(S)$, which by the above, outputs the correct data generating distribution given a choice of $P$ or $\tilde{P}$ -- and given $n \leq 1/8\epsilon_2$ samples from either $P$ or $\tilde{P}$  -- with probability $>3/4$. Note that the distributions $P$, $\tilde{P}$ have $\dTV(P, \tilde{P}) = 2 \epsilon_2 /(1-2\epsilon_2) \leq 4 \epsilon_2$, where we use $\epsilon_2 < 1/4$ and $z/(1-z) \leq 2z$ for small enough $z$ in the inequality. By a classic upper bound on the total variation between product measures \cite{reis1981}, we have that $\dTV(P^n, \tilde{P}^n) \leq 4n \epsilon_2$. Then, by Le Cam's method and this upper bound on $\dTV(P^n, \tilde{P}^n)$, we have 
\begin{equation*}
\inf_{T : S \to \{P, \tilde{P}\}} \max_{q \in \{P, \tilde{P}\}} \Pr_{S \sim q^n} \left( T_L(S) \ne q \right) \geq \frac{1}{2} -\frac{1}{2} \cdot \dTV(P^n, \tilde{P}^n) \geq \frac{1}{2} - 2 n \epsilon_2.
\end{equation*}
Thus, if $n \leq 1/8\epsilon_2$, the there is a choice of data generating distribution such that $T_L(S)$ incurs error probability $ \geq 1/4$, which is a contradiction.
\end{proof}

%%%%%%%%%%%%%%%%%%%%%%%%%%%%%%%%%%%%%%%%%%%%%%%%%%%%%%%%%%%%

\subsection{Strategies Arising from Estimation of the Validity Function}

When the validity function $v$ is known to lie in a class of bounded complexity, it is learnable, and learned estimates may be utilized in the selection of 
a low-loss, high-validity model. An interesting feature of the problem setup here is that unlike in most learning settings, the learner can actually $\textit{choose}$ which distributions it would like estimate $v$ with respect to, i.e. decide under which marginal distributions over $\cX$ the estimate $\hv$ should small disagreement with $v$. 

We begin with a lemma arguing that whenever the ERM has positive probability of outputting an example with $\hv(x)=1$, and the disagreement of $\hv$ and $v$ under a proposal distribution is small enough, the restriction will indeed yield a low-validity model.

\vspace{3mm}
\begin{lemma}\label{lemma: valid_restriction_know_validiity} 
Fix $0< \epsilon < 1$ and a distribution $q \in \Simplex$ absolutely continuous with respect to $\lambda$ arbitrarily. Further, fix $\hV(q)$ such that  $0 < \hV(q) \leq V(q)$, and suppose that for some $\hv : \cX \to \{0,1\}$ we have
% $\hv of  course needs to be measurable with respect to q/\lambda, but we make this clear in intro to appendix that we're glossing over this
\begin{equation*}
\Pr_{X \sim q} \left( \hv(X) \ne v(X) \right) \leq  \frac{\hV(q) \epsilon}{2}. 
\end{equation*}
Then whenever there is a distribution $\hq$ corresponding to 
\begin{equation*}
f_{\hq}(x) \propto f_q(x) \mathbbm{1}[\hv(x) =1], 
\end{equation*}
it has invalidity $I(\hq) \leq \epsilon$.
\end{lemma}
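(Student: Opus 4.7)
The plan is to unfold the definition of the restricted distribution $\hq$ and express its invalidity as a ratio, then bound numerator and denominator separately using the disagreement hypothesis.

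First, I would observe that when the restriction is well-defined, the normalization constant is $Z = \int f_q(x) \mathbbm{1}[\hv(x)=1]\, d\lambda(x) = \Pr_{X \sim q}(\hv(X) = 1) = V_{\hv}(q)$, which is strictly positive by hypothesis (since $\hV(q) > 0$ and we will see $V_{\hv}(q) \geq \hV(q)(1-\epsilon/2)$). Writing $f_{\hq}(x) = f_q(x)\mathbbm{1}[\hv(x)=1]/V_{\hv}(q)$, I can express the invalidity of $\hq$ as
\begin{equation*}
I(\hq) = \int \mathbbm{1}[v(x)=0] f_{\hq}(x)\, d\lambda(x) = \frac{\Pr_{X \sim q}\bigl(v(X) = 0 \ \wedge \ \hv(X) = 1\bigr)}{V_{\hv}(q)}.
\end{equation*}

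Next, I would bound the two pieces of this ratio. The numerator is clearly at most the total disagreement probability, since $\{v(X) = 0\} \cap \{\hv(X) = 1\} \subseteq \{v(X) \ne \hv(X)\}$; this gives numerator $\leq \hV(q)\epsilon/2$ by hypothesis. For the denominator, a symmetric inclusion $\{v(X)=1\} \cap \{\hv(X)=0\} \subseteq \{v(X) \ne \hv(X)\}$ yields
\begin{equation*}
V_{\hv}(q) \geq V(q) - \Pr_{X \sim q}\bigl(v(X) \ne \hv(X)\bigr) \geq V(q) - \frac{\hV(q)\epsilon}{2} \geq \hV(q)\left(1 - \frac{\epsilon}{2}\right),
\end{equation*}
where the final step uses the assumption that $\hV(q) \leq V(q)$.

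Combining the two bounds gives
\begin{equation*}
I(\hq) \leq \frac{\hV(q)\epsilon/2}{\hV(q)(1-\epsilon/2)} = \frac{\epsilon/2}{1-\epsilon/2},
\end{equation*}
and the final step is the elementary observation that for $\epsilon \leq 1$, $\epsilon/2 \leq \epsilon(1-\epsilon/2)$, so the ratio is at most $\epsilon$. I do not anticipate a real obstacle here; the only point that requires any care is verifying that $V_{\hv}(q)$ is strictly positive so the conditional density is well-defined, but this follows automatically from the bound $V_{\hv}(q) \geq \hV(q)(1-\epsilon/2) > 0$, which is consistent with the statement's ``whenever there is a distribution $\hq$'' hedge.
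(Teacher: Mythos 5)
Your proof is correct and follows essentially the same route as the paper's: express $I(\hq)$ as a ratio with the normalizing constant $V_{\hv}(q)$ in the denominator, bound the numerator by the disagreement probability $\leq \hV(q)\epsilon/2$, lower-bound $V_{\hv}(q)$ by subtracting the disagreement from $V(q)$ and invoking $\hV(q) \leq V(q)$, and finish with the elementary inequality $\tfrac{\epsilon/2}{1-\epsilon/2} \leq \epsilon$ for $\epsilon \leq 1$ (the paper packages this last step as $V(q) \leq 2V_{\hv}(q)$, which is the same inequality). No gaps.
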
 

\begin{proof}
Let  $V_{\hv}(q) =\Pr_{X \sim q} \left( \hv(X) = 1 \right) $ denote the normalizing constant for the restriction to estimated valid region. Note that the restriction corresponds to a probability distribution if and only if $V_{\hv}(q)>0$, and that in this case
\begin{equation*}
f_{\hq}(x) = \frac{f_q(x) \mathbbm{1}[\hv(x) =1]}{V_{\hv}(q)}.
\end{equation*}
It holds further that $\hq$ is absolutely continuous with respect to $\lambda$, and that we can write the following chain of relations:
\begin{align*}
I(\hq) &= \int \mathbbm{1}[v(x)=0] f_{\hq}(x) \ d\lambda(x) \\
%&= \frac{1}{V_{\hv}(q)} \int \mathbbm{1}[\hv(x) =1] \mathbbm{1}[v(x)=0] f_q(x) \ d\lambda(x) \\ % by the form of the restriction in the first equation of the proof
&\leq \frac{1}{V_{\hv}(q) }  \int \mathbbm{1}[\hv(x) \ne v(x)] f_q(x) \ d\lambda(x) \\ % if v =0 and \hv =1, then functions disagree
%&= \frac{1}{V_{\hv}(q)} \cdot  \Pr_{X \sim q} \left( \hv(X) \ne v(X) \right) \\
&\leq \frac{\hV(q)}{V_{\hv}(q)}  \frac{\epsilon}{2} \\ % apply assumption in theorem statement
&\leq \frac{V(q)}{V_{\hv}(q)}  \frac{\epsilon}{2},  % use other assumption in theorem statement
\end{align*}
where the first inequality follows after inputting the definition of $f_{\hq}$, and the final two inequalities come by assumption. It's further possible to show that validity of $q$ can be approximated from above by a constant multiple of $V_{\hv}(q)$, which can be conceptualized as the validity of $q$ if $\hv$ were the true validity function. 
In particular, 
\begin{align*}
V(q) &= \bigg( V_{\hv}(q) - V_{\hv}(q) \bigg) + V(q)  \\
%&= V_{\hv}(q) + V(q) - V_{\hv}(q)   \\
&= V_{\hv}(q) + \int \bigg( \mathbbm{1}[v(x)=1] - \mathbbm{1}[\hv(x)=1] \bigg) f_q(x) \ d\lambda(x)  \\
&\leq V_{\hv}(q) + \int \mathbbm{1}[v(x) \ne \hv(x)] f_q(x) \ d\lambda(x)  \\ % upper bound on difference of indicators
&\leq V_{\hv}(q) +  \frac{\hV(q) \epsilon}{2} \\ % accuracy guarantee of $\hv$ relative to $v$ under $q$
&\leq V_{\hv}(q) + \frac{V(q) \epsilon}{2}.   % \hV(q) \leq V(q) be assumption and loosening to drop 1/2 factor
\end{align*}
This implies that $V(q) \leq V_{\hv}(q)/(1-\epsilon/2)$, which yields $V(q) \leq 2 V_{\hv}(q)$ as $\epsilon< 1$. Utilizing this inequality in the last line of the first string of inequalities gives the guarantee.
\end{proof}
The need for a lower estimate on the validity in the precision of the estimate for $\hv$ can be understood as follows: when the proposal distribution has very small validity, the restriction to the estimation of the valid parts of space under 
$\hv$ may create huge (or infinite) increases in mass over the proposal distribution -- in the language of Lemma \ref{lemma:  valid_restriction_know_validiity}, $V_{\hv}(q)$ may be very small for a proposal distribution $q$. If this is the case, the estimate $\hv$ must be more precise, as small errors in the estimation of the validity function may lead to invalid parts of space having large mass under $\hq$. 

We introduce another lemma before proving Theorem \ref{thm: vc}. It argues that for a model $\hq$ constructed by accepting samples from some "proposal distribution" $q$ that fall in the valid part of space under $\hv$, the contribution to the loss from the part of space where $\hv$ agrees with $v$ can never exceed the total loss of the proposal distribution $q$. Essentially, we are exploiting the full-validity of $P$ here -- in the subregion of the agreement region $\{v(x) = \hv(x)\}$ on which the loss is computed, we have that $\hv(x)=1$ by the fact that $X \sim P$ is valid. This means that the density of $\hq$ can only be larger than the density of $q$ in this region, which under a non-increasing loss cannot increase the loss over that incurred by $q$.

\vspace{3mm}
\begin{lemma}\label{lemma: loss_restriction} 
Fix $0< \epsilon, \delta <1$, a validity function estimate $\hv: \cX \to \{0, 1\}$, and $q \in \cQ$ arbitrarily. Suppose $l : \mathbb{R}^{\geq 0} \to [0, M]$ is a non-increasing loss function. Then whenever 
\begin{equation*}
f_{\hq}(x) \propto f_q(x) \mathbbm{1}[\hv(x) =1]
\end{equation*}
 corresponds to a probability distribution, it enjoys % even when densities are unbounded above, this domain for loss function is fine
\begin{equation*}
\mathbb{E}_{X \sim P}\big[  l \left( f_{\hq}(X) \right) \cdot \mathbbm{1}[\hv(X) = v(X)]\big] \leq L_P(q ; l). %= \mathbb{E}_{X \sim P}\big[  l \left( f_{q}(X) \right) \big] .
\end{equation*}
\end{lemma}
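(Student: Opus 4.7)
The plan is to exploit the full-validity of $P$ to eliminate one half of the agreement region and then use monotonicity of $l$ on the other half. First, I would decompose the agreement event $\{\hv(X) = v(X)\}$ into the two sub-events $\{\hv(X) = v(X) = 1\}$ and $\{\hv(X) = v(X) = 0\}$. Since $P$ is fully-valid by the paper's standing assumption, $\Pr_{X \sim P}(v(X) = 0) = 0$, so the contribution to the expectation from the second sub-event vanishes. Thus the left-hand side reduces to $\mathbb{E}_{X \sim P}[l(f_{\hq}(X)) \cdot \mathbbm{1}[\hv(X) = v(X) = 1]]$.

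Next, I would write out the density of $\hq$ explicitly. Let $V_{\hv}(q) = \int \mathbbm{1}[\hv(x) = 1] f_q(x) \, d\lambda(x)$, which by assumption is strictly positive since $\hq$ is a probability distribution. Then for every $x$ with $\hv(x) = 1$, we have $f_{\hq}(x) = f_q(x) / V_{\hv}(q) \geq f_q(x)$, because $V_{\hv}(q) \leq 1$. In particular this inequality holds on $\{\hv(X) = v(X) = 1\}$. Since $l$ is non-increasing, this immediately gives $l(f_{\hq}(x)) \leq l(f_q(x))$ pointwise on this region.

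Finally, I would chain the inequalities together. Starting from the reduced expectation, monotonicity yields
\begin{equation*}
\mathbb{E}_{X \sim P}\big[l(f_{\hq}(X)) \cdot \mathbbm{1}[\hv(X) = v(X) = 1]\big] \leq \mathbb{E}_{X \sim P}\big[l(f_q(X)) \cdot \mathbbm{1}[\hv(X) = v(X) = 1]\big],
\end{equation*}
and then dropping the indicator (using $l \geq 0$) gives the bound $L_P(q; l)$, as desired.

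The argument is essentially a one-line calculation once the right decomposition is in place; there is no substantial obstacle. The only subtlety is being careful that $V_{\hv}(q) > 0$ so that $f_{\hq}$ is well-defined as a density, but this is precisely the hypothesis that the restriction corresponds to a probability distribution. The key conceptual point, which is worth highlighting in the writeup, is that full-validity of $P$ is what forces the relevant part of the agreement region to lie inside $\{\hv = 1\}$, which is exactly where restriction can only inflate the density and hence only reduce a non-increasing loss.
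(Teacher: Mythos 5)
Your proof is correct and follows essentially the same route as the paper's: reduce the agreement region to $\{\hv = v = 1\}$ via full-validity of $P$, observe that on $\{\hv = 1\}$ the restricted density $f_{\hq} = f_q / V_{\hv}(q) \geq f_q$ since $V_{\hv}(q) \leq 1$, apply monotonicity of $l$, and drop the indicator by non-negativity. The paper's write-up keeps the normalization and indicators written out for an extra step before invoking these same facts, but the underlying argument is identical.
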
 % $\hqERM$ is defined as in Algorithm \ref{alg: bounded_complexity}, 

\begin{proof}
%The main idea will be to argue that in parts of space where $v$ and $\hv$ agree, the estimate in the eyes of $\hv$ is that outputs are valid. This will allow us to express the integral in terms of the model $\hq$, which is closely related to $\hqERM$. 
Let $V_{\hv}(q) =\Pr_{X \sim q}\left( \hv(x) =1 \right)$ be the normalizing constant for $\hq$, where we note that $V_{\hv}(q)>0$ when $f_q(x) \mathbbm{1}[\hv(x) =1]$ corresponds to a probability distribution.  

Given that $P$ is fully-valid, we have $\Pr_{X \sim P}\left( v(X)=1 \right) =1$. By the fact that integration is defined up to null sets, it holds that % because integration is defined up to null sets
\begin{equation*}
\mathbb{E}_{X \sim P}\bigg[  l \left( f_{\hq}(X) \right) \cdot \mathbbm{1}[\hv(X) = v(X)]\bigg] = \mathbb{E}_{X \sim P}\bigg[  l \left( f_{\hq}(X) \right) \cdot  \mathbbm{1}[\hv(X) = v(X) \wedge v(X)=1]\bigg] .
\end{equation*}
Further, we may write
\begin{align*}
 \mathbb{E}_{X \sim P}\bigg[  l \left( f_{\hq}(X) \right) &\cdot  \mathbbm{1}[\hv(X) = v(X) \wedge v(X)=1]\bigg] \\ 
 &\leq   \mathbb{E}_{X \sim P}\bigg[  l \left( \frac{f_{q}(X) \mathbbm{1}[\hv(X)=1]}{V_{\hv}(q)} \right) \cdot  \mathbbm{1}[\hv(X) =1]\bigg] \\ % v(X)=\hv(X) and v(X)=1 implies \hv(X)=1 (just applied to indicator), and drop definition of f_{\hq} inside loss 
&\leq \mathbb{E}_{X \sim P}\bigg[  l \left( \frac{f_{q}(X)}{V_{\hv}(q)} \right) \bigg] \\ % by the boundedness of the loss, you can show by considering cases of \hv(x)=1 and \hv(x)=0 that this is true. the point is when \hv(x)=0, then the integrand is 0 and when it's not, this what it looks like. by the non-negativity of the loss, the integrand here is thus an upper bound for the previous integrand
&\leq \mathbb{E}_{X \sim P}\bigg[  l \left( f_{q}(X) \right) \bigg] \\ % non-increasingness of loss 
&= L_P(q; l). \\ % loss is nonnegative 
\end{align*} % should give the high level intuition for why this works
Here, the second to last inequality comes from the non-negativity of the loss along with the fact that whenever $\hv(X)=0$, the integrand is zero -- when $\hv(X)=1$, the loss is just evaluated at the normalized density, and so the integrand introduced in this line is an upper bound for the previous integrand. The final inequality comes from the non-increasingness of the loss function along with the observation that $V_{\hv}(\hq)\leq 1$ -- in removing the normalizing constant, we can only make the value at which the loss is evaluated at smaller, which cannot decrease the value of the loss. 
\end{proof}

We are now ready to prove the main result of the second half of the paper -- the guarantee for Algorithm \ref{alg: bounded_complexity}. It combines the previous lemmas, noting further that the number of samples in $S_P$ is sufficient to 
make the disagreement of $v$ and $\hv$ small enough under $P$ such that the contribution to the loss in that part of space can be controlled by trivially applying the loss upper bound $M$. 

\vspace{3mm}
\begin{customthm}{3}
Suppose $v \in \cV$ with VC-dimension $VC(\cV) \leq D$, and that for each $q \in \cQ$, the validity $V(q) \geq \gamma > 0$. For all $0 < \epsilon_1, \epsilon_2, \delta \leq 1$ and for all choices of non-increasing loss functions $l: \mathbb{R}^{\geq 0} \to [0, M]$, Algorithm \ref{alg: bounded_complexity} 
requires a number of samples 
\begin{equation*}
 \leq O\left( \frac{M^2 \left( \log(|\mathcal{Q}|) + \log(1/\delta)  \right)}{\epsilon_1^2}  + \frac{ M \left(D \log(M / \epsilon_1) + \log(1/\delta) \right)}{\epsilon_1}  \right),
\end{equation*}
and a number of validity queries 
\begin{equation*}
\leq O\left( \frac{D \log(1 /  \gamma \epsilon_2) + \log(1/\delta) }{\gamma \epsilon_2}  \right),  % just one factor of M in label complexity as need it for p* eps/M precision of active learner
\end{equation*} 
to ensure that with probability $\geq 1-\delta$, its output enjoys
\begin{equation*}
L_P(\hat{q}; l) \leq L_P(q^*; l) + \epsilon_1 \ \ and \ \  I(\hq) \leq \epsilon_2.
\end{equation*} 
\end{customthm} 
% on whether M>1 required: no, you don't need it, it's just sort of funny because it then reduces the query budget. lemma: loss_restriction2 still goes through with M < 1

\begin{proof}
Given that the loss is bounded, Hoeffding's inequality applied to the random variables $l\left( f_q(X)\right)$ for $X \sim P$, and a union bound, imply that $S$ is large enough that with probability $\geq 1-\delta/3$, we have that for all $q \in \cQ$, the empirical loss estimates $L_S(q; l)$ are at most $\epsilon_1/4$ away from true losses $L_P(q; l)$. For any choice of $\hqERM$, because we have $v \in \cV$, it must hold that any minimizer $\hv$ is consistent with the labeling under $v$ of both $S_P$ and $S_{\hqERM}$. The standard rates of convergence when choosing an arbitrary consistent hypothesis thus imply that the sizes of $S_P$ and $S_{\hqERM}$ are large enough to guarantee that, with probability $\geq 1- 2\delta/3$, 
we have % snuck in a union bound here
\begin{equation*}
\Pr_{X \sim P} \bigg(  \hv(X) \ne v(X) \bigg) \leq \frac{\epsilon_1}{2M} \ \  \wedge  \ \ \Pr_{X \sim \hqERM} \bigg(  \hv(X) \ne v(X) \bigg) \leq \frac{\gamma \epsilon_2}{2}. 
\end{equation*}

\begin{comment} % the old analysis is possible as well, just doesnt separate eps1 and eps2 in the sample and query complexities
For any choice of $\hqERM$,  Lemma \ref{lemma: mixture} -- which gives a bound for the loss of $\hv$ relative to $v$ under both of the mixture components  $\hqERM$ and $P$  -- and the standard rates for realizable classification \cite{shalevshwartz2014} imply that an i.i.d. sample $S'$ of size 
\begin{equation*}
\geq \Omega\bigg( \frac{D \log(1/  \min( \frac{\epsilon_1}{M}, \gamma \epsilon_2) ) + \log(1/\delta)}{\min(\frac{\epsilon_1}{M}, \gamma \epsilon_2)} \bigg)
\end{equation*}
is large enough to guarantee that with probability $\geq 1- \delta/2$ that 
\begin{equation*}
\max_{q \in \{\hqERM, P\} } \Pr_{X \sim q} \bigg(  \hv(X) \ne v(X) \bigg) \leq \min\left(\frac{\gamma \epsilon_2}{2}, \frac{\epsilon_1}{2M}\right). 
\end{equation*}
when $S'$ is labeled via validity queries. 
\end{comment}

By a union bound, with probability $\geq 1-\delta$, all of these estimation accuracy events take place. We condition on these favorable events taking place going forwards. 

Note that conditioned on these favorable events, the normalizing constant $\hqERM \left( \{\hv(x) = 1 \} \right) >0$, as for any ERM, we have 
\begin{align*}
\hqERM \left( \{\hv(x) = 1 \} \right) %&=  \mathbb{E}_{X \sim \hqERM} \left[ \mathbbm{1}[v(x) = 1] \right] + \bigg( \mathbb{E}_{X \sim \hqERM} \left[ \mathbbm{1}[\hv(x) = 1] - \mathbbm{1}[v(x) = 1] \right]  \bigg) \\ % plus or minus in the true validity of \hqERM
&= \mathbb{E}_{X \sim \hqERM} \left[ \mathbbm{1}[v(x) = 1] \right] - \int \left( \mathbbm{1}[v(x)=1] - \mathbbm{1}[\hv(x)=1]  \right) d\hqERM(x) \\ % rearrange from commented out line above
& \geq \mathbb{E}_{X \sim \hqERM} \left[ \mathbbm{1}[v(x) = 1] \right] - \int \mathbbm{1}[v(x) \ne \hv(x)]  d\hqERM(x) \\ %  upper bound difference of indicators in the classic way
&\geq \gamma - \frac{\gamma \epsilon_2}{2} \\  % lower bound on validity of models in \cQ and upper bound on disagreement under \hQERM under good estimation
&> 0.   % note that testing whether a bernouli has 0 probability is extremely cheap 
\end{align*}  % actually the probability that the normalizing constant is zero has lower probability than \delta because of the precision we learn \hv to but not the point here.
Thus, the restriction of the ERM to the estimated validity region is a viable probability distribution, and is outputted by the algorithm as $\hq$. For any estimate of the validity function $\hv$, we can decompose the loss of $\hq$ as 
\begin{equation*}
L_P ( \hq ; l) = \mathbb{E}_{X \sim P}\bigg[l\left(f_{\hq}(X)\right) \cdot  \mathbbm{1}[\hv(X) = v(X) ]\bigg] + \mathbb{E}_{X \sim P}\bigg[l \left(f_{\hq}(X) \right) \cdot \mathbbm{1}[\hv(X) \ne v(X) ]\bigg].
\end{equation*}
First using Lemma \ref{lemma: loss_restriction}, and then using the uniform convergence of the loss estimates, we can bound the first term as 
\begin{comment}
% pedantic exposition of uniform convergence application which i formerly included
\begin{align*}
\mathbb{E}_{X \sim P} \left[ l( \hqERM(X)) \right]  &\leq  \frac{1}{n} \sum_{i=1}^n l(\hqERM(X_i)) + \epsilon/4 \\ % using the uniform convergence
&\leq  \frac{1}{n} \sum_{i=1}^n l(\qURM(X_i)) +  \epsilon/4  \\ % using fact that no other model can have smaller empirical risk than the ERM
&\leq \mathbb{E}_{X \sim P} \left[ l(\qURM(X)) \right] + \epsilon/2 \\ % using the uniform convergence
&\leq \mathbb{E}_{X \sim P} \left[ l(q^*(X)) \right] + \epsilon/2, \\  % using the fact that the solution to the constrained problem cannot have a smaller loss than the un-constrained problem
\end{align*}
\end{comment}
\begin{align*}
\mathbb{E}_{X \sim P}\bigg[l\left(f_{\hq}(X) \right) \cdot \mathbbm{1}[\hv(X) = v(X) ]\bigg] &\leq L_P ( \hqERM ; l ) \\
&\leq L_P( \qURM; l ) + \frac{\epsilon_1}{2} \\
&\leq L_P(q^*; l) + \frac{\epsilon_1}{2}, % $q^*$ is the lowest-loss, fully-valid model in the class $\cQ$
\end{align*} %When this holds we have in particular that $\Pr_{X \sim P} \left( \hv(X) \ne v(X) \right) \leq \epsilon_1/M$, and  $\Pr_{X \sim \hqERM} \left( \hv(X) \ne v(X) \right) \leq \gamma \epsilon_2$. 
where $\qURM = \argmin_{q \in \cQ} L_P(q; l)$ is the lowest-loss model in the class $\cQ$. To upper bound the second term in the loss decomposition, we can use the fact that $\Pr_{X \sim P} \left( \hv(X) \ne v(X) \right) \leq \epsilon_1/2M$ and the upper bound on the loss to write % note here that we are conditioning on the good estimation event here
\begin{equation*}
\mathbb{E}_{X \sim P}\bigg[  l \left( f_{\hq}(X) \right) \cdot \mathbbm{1}[\hv(X) \ne v(X)]\bigg]  \leq M \cdot  \mathbb{E}_{X \sim P}\bigg[ \mathbbm{1}[\hv(X) \ne v(X)]\bigg]  \leq \frac{\epsilon_1}{2}, 
\end{equation*}
yielding the loss guarantee. 

The validity guarantee follows directly from the fact that $\Pr_{X \sim \hqERM} \left( \hv(X) \ne v(X) \right) \leq \gamma \epsilon_2/2$ and Lemma \ref{lemma: valid_restriction_know_validiity}, where $\gamma$ furnishes the lower estimate for the validity of the model $\hqERM$.
\end{proof}

The corollary to Theorem \ref{thm: vc} stating that only low-loss models need appreciable validity is straightforwards. One can simply add an extra line to the proof of Theorem \ref{thm: vc}, arguing that when the intersection of good estimation events takes place, the loss of the ERM distribution is within $O(\epsilon_1)$ of the optimal loss across models in $\cQ$, meaning that it has validity greater than some constant $c$. Thus, one can run Algorithm \ref{alg: bounded_complexity} with an $S_{\hqERM}$ large enough to achieve $O( \epsilon_2)$ disagreement rate between $\hv$ and $v$ under samples from $\hqERM$, lowering the label complexity.

%%%%%%%%%%%%%%%%%%%%%%%%%%%%%%%%%%%%%%%%%%%%%%%%%%%%%%%%%%%%

\begin{algorithm}[t]
\caption{Restriction to ERM under Log-Loss without Validity Assumption}\label{alg: restriction_log}
\begin{algorithmic}[1]
\Procedure{\texttt{valid\_restriction\_log}}{Distribution Class $\cQ$, Validity Class $\mathcal{V}$, $\mathcal{D}$, $\delta$, $\epsilon_1$, $\epsilon_2$}
\vspace{2mm}
\State $ S \gets \Omega\left( \frac{M^2 \left( \log(|\cQ|) + \log(1/\delta) \right)}{\epsilon_1^2}  \right)$ i.i.d samples $\sim P$
\vspace{2mm}
\State $\hqERM \gets \argmin_{q \in \cQ} \sum_{x \in S}  \min(M, -\log(f_q(x)))$ 
\vspace{2mm}
\State $\hqM \gets(1-\epsilon_1/8) \cdot \hqERM + \epsilon_1/8 \cdot \mathcal{D}$  \Comment{Mix with constant validity $\mathcal{D}$}
\vspace{2mm}
\State $S_P \gets \Omega\left(\frac{ M \left( D \log\left(M/\epsilon_1\right) + \log(1/\delta)\right) }{\epsilon_1} \right)$ i.i.d. samples $\sim P$,

\hspace{12mm} $S_{\hqM} \gets \Omega\left(\frac{ D \log\left(1/\epsilon_1 \epsilon_2\right) + \log(1/\delta) }{ \epsilon_1 \epsilon_2} \right)$ i.i.d. samples $ \sim \hqM$
\vspace{2mm}
\State $\hv  \gets \argmin_{h \in \cV} \sum_{x \in S_P \cup S_{\hqM}} \mathbbm{1}[h(x) \ne v(x)]$  \Comment{Label $x \in S_{\hqM}$ via $v$}
\vspace{2mm}
\State \textbf{return} $f_{\hq} \propto f_{\hqM}(x) \cdot \mathbbm{1}[\hv(x) =1]$ \textbf{if} $\hqM \left( \{ \hv(x)= 1 \} \right) >0 $ \textbf{else} $f_{\hq} = f_{\hqERM}$
\vspace{2mm}
\EndProcedure
\end{algorithmic}
\end{algorithm}

The proof of Theorem \ref{thm: log_vc} is very similar to that of Theorem \ref{thm: vc}. The main difference is that when the loss is the capped log-loss, we can exploit a stability property under mixture similar to that introduced in Lemma \ref{lemma: mixture_approx}. This allows us to mix $\hqERM$ with a distribution of constant validity to get a validity lower bound on the final proposal distribution $\hqM$ without increasing the loss more than $O(\epsilon_1)$. The validity lower bound can then be 
used as in Theorem \ref{thm: vc}.

\begin{customthm}{4}
Suppose $v \in \cV$ where $VC(\cV) \leq D$, and that for each $q \in \cQ$, we have $f_q(x) \leq \beta$. Suppose further that there is some known $\cD \in \Simplex$ with density $f_{\cD}$ which has $V(\cD) \geq c>0$ for some constant $c$. Then for all choices of $0 < \epsilon_1, \epsilon_2,  \delta < 1/2$ and $M>0$, Algorithm \ref{alg: restriction_log} requires a number of samples 
\begin{equation*}
 \leq \tilde{O}\left( \frac{M^2 \left( \log(|\mathcal{Q}|) + \log(1/\delta)  \right)}{\epsilon_1^2}  + \frac{ M \left(D \log(M / \epsilon_1) + \log(1/\delta)\right) }{\epsilon_1} \right),
\end{equation*}
and a number of validity queries 
\begin{equation*}
\leq O\left( \frac{D \log(1 /  \epsilon_1 \epsilon_2) + \log(1/\delta) }{\epsilon_1 \epsilon_2}  \right),  % just one factor of M in label complexity as need it for p* eps/M precision of active learner
\end{equation*} 
to ensure that with probability $\geq 1-\delta$, its output enjoys
\begin{equation*}
\mathbb{E}_{X\sim P} \bigg[ \min\left(M, \log(1/ f_{\hq}(X) ) \right) \bigg] \leq \mathbb{E}_{X\sim P} \bigg[ \min\left(M, \log(1/ f_{q^*}(X)) \right) \bigg] + \epsilon_1 \ \  and \ \ I(\hq) \leq \epsilon_2. 
\end{equation*}
\end{customthm}

\begin{proof}
WLOG assume $\beta \geq 1$, and consider learning over $\bar{l}(z) = \min(M, \log(1/z)) - \log(1/\beta)$, a translation of the capped log-loss bounded below by $0$ for all inputs to $f_q \in \cQ$, and bounded above by $\bar{M} = M - \log(1/\beta)$.  %This can be done WLOG, as the analysis below is invariant to such a translation by decomposing the constant and the upper bound means that you never interfere with the curve of the loss below in getting bounded R.Vs 
% WLOG on translation: fix the loss to be \bar{l} as above. we show below that we can learn over this. but a guarantee can be transformed into a guarantee for original by add to both sides of L_P(\hq ;\bar{l}) \leq L_P(q^* ; \bar{l}) + \eps. 
% WLOG on upper bound:  if sup_x max_q f_q(x) < 1, then beta \geq 1 is an upper bound on the density still just looser

Similar to the proof of Theorem \ref{thm: vc}, with probability $\geq 1-\delta/2$ over the sample $S \sim P^n$, it holds that for all $q \in \cQ$ that $\left| L_S(q ; \bar{l}) -L_P(q ; \bar{l}) \right| \leq \epsilon_1/8$; in this case, we use the fact that 
$f_q \leq \beta$ to ensure that the random variables $\bar{l}\left(f_q(X)\right)$ for $X \sim P$ are bounded, allowing for an application of Hoeffding's inequality over empirical estimates of a loss unbounded below. As above, for any choice of $\hqM$, the sizes of $S_P$ and $S_{\hqM}$ are such that with probability $\geq 1-2\delta/3$, 
\begin{equation*}
\Pr_{X \sim P} \bigg(  \hv(X) \ne v(X) \bigg) \leq \frac{\epsilon_1}{2M} \ \  \wedge  \ \ \Pr_{X \sim \hqM} \bigg(  \hv(X) \ne v(X) \bigg) \leq \frac{c \epsilon_1 \epsilon_2}{16}. 
\end{equation*}As before, by a union bound, these bounds both hold simultaneously with probability $\geq 1-\delta$.  We condition on this intersection of favorable events going forwards.

Note that when this intersection of events takes place, we have $\hqM \left( \{\hv(x) = 1 \} \right)>0$.  In this case, we have that 
$V(\hqM) \geq \epsilon_1V(\cD)/8 \geq  \epsilon_1c/8$ as $V(\cD) \geq c$, and so identically to our work in Theorem \ref{thm: vc}, we may write 
\begin{align*}
\hqM \left( \{\hv(x) = 1 \} \right) %&= \mathbb{E}_{X \sim \hqM}\left[\mathbbm{1}[v(X)=1]\right] + \mathbb{E}_{X \sim \hqM}\left[\mathbbm{1}[\hv(X)=1]\right] - \mathbb{E}_{X \sim \hqM}\left[\mathbbm{1}[v(X)=1]\right] \\
& \geq \mathbb{E}_{X \sim \hqM}\left[\mathbbm{1}[v(X)=1]\right]  - \int \mathbbm{1}[\hv(x) \ne v(x)] d\hqM(x) \\ 
& \geq  \frac{c\epsilon_1}{8} - \frac{c \epsilon_1 \epsilon_2}{16} \\
& > 0. 
\end{align*}
Thus, the restriction of $\hqM$ to the estimate of the valid region is defined and outputted by the algorithm as $\hq$. To see that the loss guarantee then holds for such a $\hq$, consider the loss decomposition used in the proof of Theorem \ref{thm: vc}:
\begin{equation*}
L_P ( \hq ; \bar{l}) = \mathbb{E}_{X \sim P}\bigg[\bar{l} \left(f_{\hq}(X) \right) \mathbbm{1}[\hv(X) = v(X) ]\bigg] + \mathbb{E}_{X \sim P}\bigg[\bar{l} \left(f_{\hq}(X) \right) \mathbbm{1}[\hv(X) \ne v(X) ]\bigg].
\end{equation*}
We upper bound the second term exactly as in Theorem \ref{thm: vc}. % we use the upper bound of $\bar{M}$ on $\bar{l}$ and the fact that by the previous line, the disagreement of \hv and v under P is \leq \epsilon_1/2\bar{m} 
 To upper bound the first term, consider an argument similar to that of the proof of Lemma \ref{lemma: loss_restriction}. Let $V_{\hv}(\hqM)>0$ be the normalizing constant for $\hq$. We can write 
\begin{align*}
\mathbb{E}_{X \sim P}\bigg[\bar{l}\left(f_{\hq}(X)\right) \mathbbm{1}[\hv(X) = v(X) ]\bigg] &\leq \mathbb{E}_{X \sim P}\bigg[  \bar{l} \left( \frac{f_{\hqM}(X) \mathbbm{1}[\hv(X)=1]}{V_{\hv}(\hqM)} \right) \cdot  \mathbbm{1}[\hv(X) =1]\bigg] \\ % v(X)=\hv(X) and v(X)=1 implies \hv(X)=1 and expression of the density for $\hq$
&\leq \mathbb{E}_{X \sim P}\bigg[  \bar{l} \left( \frac{f_{\hqM}(X)}{V_{\hv}(\hqM)} \right) \bigg] \\ % essentially the same argument as in Lemma 11 -- see for precise argument, but basically a case analysis over \hv = 1 or 0, using boundedness of loss from above in the latter case
&\leq \mathbb{E}_{X \sim P}\bigg[  \bar{l} \left( f_{\hqM}(X) \right) \bigg]. \\ % non-increasingness of loss still applies
\end{align*} % high level intuition found in proof of Lemma 11
% note that it is possible that f_{\hqM}/V_{\hv}(\hqM)) is larger than \beta. this is completely fine however. 
Here, the non-increasingness of the loss still holds, leading to the final step. Now, fix some  $x \in \cX$ arbitrarily, and note the following, in the style of Lemma \ref{lemma: mixture_approx}:
% note, you don't need f_D \leq \beta; loss is fine with this stuff, just needed that bound for concentration
\begin{align*} % whole point with these types of calculations is that log(x) is well approximated by a linear function near 1; the form of the loss in above line is by the density upper bound
\bar{l}(f_{\hqM}(x)) + \log(1/\beta) &= \log\bigg(1/f_{\hqM}(x) \bigg) \wedge M \\ % definition of \bar{l}, translated 'up' by log(1/\beta)
&\leq \left(  \log\bigg( \frac{1}{1-\epsilon_1/8} \bigg) +  \log\bigg(1/ f_{\hqERM}(x) \bigg) \right) \wedge M  \\ % arguments for lemma: mixture approx + min(a, b) \leq min(c,b) if a \leq c 
&\leq \left(  \epsilon_1/4 +  \log\bigg(1/ f_{\hqERM}(x) \bigg) \right) \wedge M \\ % again, use the argument for mixture approx + min(a, b) \leq min(c,b) if a \leq c: we are arguing that we can upper bound the first term by prev arguments
&\leq \log\bigg(1/ f_{\hqERM}(x) \bigg) \wedge M + \frac{\epsilon_1}{4}.\\ %here, we use min(a+b, c) \leq min(a+b, c+b) for non-negative b, then pull the b outside
\end{align*} % now subtract from both sides to get the statement you want 
Thus, it holds that $\bar{l}(f_{\hqM}(x)) \leq \bar{l}(f_{\hqERM}(x)) + \epsilon_1/4$, and so we may write
\begin{align*}
\mathbb{E}_{X \sim P}\bigg[\bar{l}(f_{\hq}(X)) \bigg]  &\leq \mathbb{E}_{X \sim P}\bigg[ \bar{l} \left( f_{\hqERM}(X) \right) \bigg] + \frac{\epsilon_1}{4} \\ % loss behaves like log-loss assumption
&= L_P(\hqERM; \bar{l}) + \frac{\epsilon_1}{4}. \\ % loss is nonnegative, so we can integrate across the whole space
\end{align*}
Thus, we have written the loss in terms of the ERM, and so we have, as in Theorem \ref{thm: vc}, that the first term of the loss decomposition can be bounded by $L_P(q^*) + \epsilon_1/2$. 
% The point is that  we have L_P(\hq ; \bar{l} ) \leq L_P(\hq_ERM ; \bar{l} + \eps_1/4 \leq L_P(q^{*, l} ; \bar{l}) + \epsilon_1/2 \leq L_P(q^* + \epsilon_1/2 , where L_P(q^{*, l} ; \bar{l}) is the lowest true loss model in Q. so our performance not much worse
% than the ERM performance, which is a good approx for unconstrained true loss minimizer, which in turn has a loss smaller than $q^*$, the lowest-loss full validity model in Q. 

Given $\Pr_{X \sim \hqM} \left( \hv(X) \ne v(X) \right) \leq c \epsilon_1 \epsilon_2/16$, and the lower bound on the 
validity of $\hqM$ derived in the third paragraph above, we can again apply Lemma \ref{lemma: valid_restriction_know_validiity} to get the validity guarantee.
%as $V(\cD) \geq c$ implies that $V(f_{\hqM}) \geq \epsilon_1V(\cD)/8 \geq  \epsilon_1c/8$. %1/8 is for the loss; here we are using \epsilon_1c/8 = \hat{V} in the lemma statement to apply the lemma
% validity -- just use constant eps1 as gamma. need eps1 as you cant take away from the loss too much
\end{proof}

\end{document}